\newtheorem{rem}{\emph{Remark}}
\newtheorem{defn}{Definition}
\newtheorem{thm}{Theorem}
\newtheorem{lemma}{Lemma}
\newtheorem{assumptions}{Assumptions}
\title{Understanding the Behaviour of the Empirical Cross-Entropy Beyond the Training Distribution}
\author{%
  Matias Vera \\
  Facultad de Ingenier\'ia \\
  Universidad de Buenos Aires, Argentina\\
  \texttt{mvera@fi.uba.ar} 
 % examples of more authors
  \And
 Pablo Piantanida \\
  Mila and Université de Montr\'al, Canada \\
 L2S and CentraleSup\'elec-CNRS-UPSud, France\\
   \texttt{piantani@mila.quebec} 
   \AND
   Leonardo Rey Vega \\
  Facultad de Ingenier\'ia \\
  Universidad de Buenos Aires, Argentina\\
  \texttt{lrey@fi.uba.ar} 
}
\begin{document}

\maketitle

\begin{abstract}
Machine learning theory has mostly focused on generalization to samples from the same distribution as the training data. Whereas a better understanding of generalization beyond the training distribution where the observed distribution changes is also fundamentally important to achieve a more powerful form of generalization. In this paper, we attempt to study  through the lens of information measures how a particular architecture behaves when the true probability law of the samples is potentially  different at training and testing times. Our main result is that the testing gap between the empirical cross-entropy and its statistical expectation (measured with respect to the testing probability law) can be bounded with high probability by the mutual information between the input testing samples and the corresponding representations, generated by the encoder obtained at training time. These  results of theoretical nature are supported by numerical simulations showing that the mentioned mutual information is representative of the testing gap, capturing qualitatively the dynamic in terms of the hyperparameters of the network.
\end{abstract}

\section{Introduction}

Most theories of generalization for classification, both theoretical and empirical, assumes that models are trained and tested using data drawn from some fixed distribution.  What is often needed in practice, however, is to learn a classifier which performs well on a target domain with a significantly different distribution of the training data, which may involve similar concepts that were observed  previously by the learner but with some features changes~\citep{DBLP:journals/corr/abs-1901-10912}. In general, we would expect that the result of the learning phase to be able to generalize well to other distributions, with the added functionality of being able to monitoring the dynamics of the generalization due to external factors or non-stationarities of the involved distributions. In this paper,   we investigate  through the lens of information-theoretic principles how the testing gap between the empirical cross-entropy and its statistical expectation (computed from the target distribution) behaves when the true probability law of the data  samples is potentially different at training and testing time.

According to classical statistical learning theory~\citep{boucheron_bousquet_lugosi_2005}, models with many parameters tend to \emph{overfit} by representing the training data too accurately, therefore diminishing their ability to generalize to unseen data \cite{bishop_praml}. Interestingly enough,  this phenomena does seem to be happening with Deep Neural Networks (DNNs) which even with many parameters and a modest number of training samples present good generalization properties as shown by ~\cite{Neyshabur2017GeometryOO,DBLP:journals/corr/ZhangBHRV16}. 

%Some typical regularization techniques are noise injection, early stopping and weights penalization \cite{Goodfellow-et-al-2016}, being the latter usually motivated by PAC-stlye bounds \cite{Valiant:1984:TL:1968.1972} and/or concentration inequalities \cite{boucheron2013}. 

Stochastic representations encompass the classical learning problem to include graphic models, even neuronal ones such as Variational Auto-Encoders (VAEs)~\citep{kingma2013auto} or Restricted Boltzmann Machines (RBMs)~\citep{Hinton_guia}. With this models, mutual information between feature inputs and their representations becomes a relevant quantity. Empirical studies based on the Information Bottleneck (IB) method \citep{Tishby1999information} have shown that this mutual information may be related to the overfitting problem in DNNs. The IB method studies the tradeoff between accuracy and information complexity measured in terms of the mutual information. Statistical rates on the empirical estimates of the corresponding IB tradeoffs have been reported in~\cite{Shamir:2010:LGI:1808343.1808503}. \citet{DBLP:journals/corr/Shwartz-ZivT17} show, based on empirical evidence, that the DNN generalization capacity is induced by an implicit data compression of the feature inputs. However, these trade-offs are still controversial and the question remains open. As a matter of fact, recent works by~\citep{DBLP:journals/corr/abs-1802-09766} and~\citep{saxe2018} report difficulties in using the IB trade-off as an information-regularized objective for training. Indeed,  it is known that estimating mutual information from high-dimensional data samples is a difficult and challenging problem for which, even state-of-the-art methods, might lead to misleading numerical results~\citep{DBLP:conf/icml/BelghaziBROBHC18}. 

%In order to overcome this problem, the simulations results presented in the present  work are based on computable parametric upper bounds of the involved information measures. 

% Our main result is that the testing gap between the empirical cross-entropy and its statistical expectation (measured with respect to the testing probability law) can be bounded with high probability by the mutual information between the input testing samples and the corresponding representations, generated by the encoder obtained at training time. 

\subsection{Our Contribution}

We investigate the probability concentration of the testing gap between the empirical cross-entropy and its statistical expectation, measured with respect to the testing probability distribution which may be different of the training distribution. More specifically, there are two attributes that characterize our model: (a) we study a deviation on the target (testing) dataset once the training stage is finished, i.e., for a given soft-classifier; and (b) we consider randomized encoders which allows us not only to study deterministic feed-forward structures, but also stochastic models.

Theorem~\ref{thm:regularizar} provides the rigorous statement of the cross-entropy deviation bound that is the basis of this work. In contrast to standard learning concentration inequalities, this bound  scales with $\log(n)/\sqrt{n}$ and $1/\sqrt{n}$ in a $n$-length dataset. Our bound depends on several factors: a mutual information term between the input and the representations generated from them using a reference selected encoder; a second term that measures the decoder efficiency; and two other magnitudes that measure how robust can be interpreted the problem, motivated by \cite{Xu2012} among others. Despite the fact that our results may not lead to the tightest bounds, they are intended to reflect the importance of information-theoretic concepts in the problem of representation learning and the different trade-offs that can be established between information measures and quantities of interest in statistical learning. 

An empirical investigation of the interplay between the cross-entropy deviation and the mutual information is provided on high-dimensional datasets of natural images with rotations and translations on the target domain. These simulations show the ability of mutual information to predict the behavior of the gap for the case of three well-known stochastic representation models: (a) the standard Variational Auto-Encoders (VAE)~\citep{kingma2013auto}; (b) the log-normal encoder presented in the Information Dropout scheme in~\citep{2016arXiv161101353A}; and (c) the classical encoder based on Restricted Boltzmann Machines (RBMs)~\citep{Hinton_guia}.  These results validates the fact that mutual information is an important quantity that is strongly related to the generalization properties of learning algorithms, motivating the need of further studies in this respect. 
%We also study the tradeoff between the magnitudes that define the robustness of the problem. 

The rest of the paper is organized as follows. In Section \ref{Section-2}, we introduce a stochastic model which combines the randomized encoder concept like in \cite{2016arXiv161101353A} with the possibility of decomposing the classifier in a encoder and a decoder (\cite{DBLP:journals/corr/Shwartz-ZivT17}).  In Section  \ref{sec:mainbound} we present our general concentration inequality based on the above mentioned mutual information and other specific magnitudes, which will be discussed. While in Section \ref{sec:experimentos} we show numerical evidence for some selected models, in Section \ref{sec:conclusion} we provide concluding remarks. Major mathematical details %and a non-trivial relationships between mutual information and norm penalty terms in RBM 
are relegated to the appendices in the Appendix.

\section{Representation and Statistical Learning}\label{Section-2}
%%%%%%%%%%%%%%%%%%%%%%%%%%%%%%%%%%%%%%%%%%%%%

We are interested in the problem of pattern classification consisting in the prediction of the unknown class that match an observation. An observation or example is often a sample $x\in \mathcal{X}$ which have an associated label $y\in \mathcal{Y}$ (finite space). An $|\mathcal{Y}|$-ary classifier is defined by a (stochastic) decision rule $Q_{\hat{Y}|X}:\mathcal{X} \rightarrow \mathcal{P}(\mathcal{Y})$, where $\hat{Y}\in\mathcal{Y}$ denotes the random variable associated to the classifier output and $X$ is the random observed example. Typically, this classifier is trained with samples generated according to an unknown training distribution. It is also assumed that the testing examples and their corresponding labels are generated in an i.i.d. fashion according to $p_{XY}\coloneqq p_XP_{Y|X}$ (which could be possibly different from the above mentioned training distribution).\footnote{In writing this we allow us a little abuse of notation to simplify it since $p_X$ is a pdf. and $P_{Y|X}$ is pmf.}

%We will concern ourselves with learning representation models (randomized encoders) and inference models (randomized  decoders) from randomly generated samples. These models are sometimes called stochastic representations. 
The problem of finding a good classifier can be  divided into that of simultaneously finding an (possibly randomized) encoder $q_{U|X}:\mathcal{X}\rightarrow\mathcal{P}(\mathcal{U})$ that maps raw data to a higher-dimensional (feature) space $\mathcal{U}$ and a soft-decoder $Q_{\hat{Y}|U}:\mathcal{U}\rightarrow\mathcal{P}(\mathcal{Y})$ which maps the representation to a probability distribution on the label space $\mathcal{Y}$. These  mappings induce an equivalent classifier: 
\begin{equation}
Q_{\hat{Y}|X}(y|x) = \mathbb{E}_{q_{U|X}}\left[Q_{\hat{Y}|U} (y|U)|X=x\right],  \label{eq-clasifier}
\end{equation}
\begin{rem}
	In the standard methodology with deep representations, we consider $L$ randomized encoders ($L$ layers) $\{q_{U_l|U_{l-1}}\}_{l=1}^L$ with $U_0\equiv X$. Although this appears at first to be more general, it can be casted formally using the one-layer case formulation  induced by the marginal distribution that relates  the input and the final $L$-th output layer. Therefore results on the one-layer formulation also apply to the $L$-th layer formulation and thus, we shall thus focus on the one-layer case without loss of generality.  
\end{rem}
This representation contains several cases of interest as the feed-forward neural net case as well as genuinely graphical model cases such as VAE or RBM. The computation of \eqref{eq-clasifier} requires marginalizing out $u \in\mathcal{U}$ which could be computationally prohibitive in practice. We use the  \emph{cross-entropy} as a loss-function:
\begin{equation}
\ell(x,y)\coloneqq\ell\big(q_{U|X}(\cdot|x),{Q}_{\hat{Y}|U}(y | \cdot)\big)=\mathbb{E}_{q_{U|X}} \left[ -\log Q_{\hat{Y}|U}(y|U)|X=x\right]. \label{eq-true-loss}
\end{equation}

The learner's goal is  to select $(q_{U|X},Q_{\hat{Y}|U})$ by minimizing the expected risk under the training distribution: ${\mathcal{L}}(q_{U|X},Q_{\hat{Y}|U}) \coloneqq  \mathbb{E}_{p_{XY}}   \left[\ell(X,Y)\right]$. This is done at training time using the empirical risk computed with i.i.d. samples from the training distribution. After training phase is over we would like to evaluate the performance of the obtained $(q_{U|X},Q_{\hat{Y}|U})$ based on a testing dataset: $\mathcal{S}_n \coloneqq \{(x_1,y_1)\cdots(x_n,y_n) \}$ which is independent from the training set (but not necessarily sampled from the same distribution).  The \emph{testing} risk is defined by
\begin{equation}
{\mathcal{L}}_{\text{emp}}(q_{U|X},Q_{\hat{Y}|U},{\mathcal{S}_n} )\coloneqq  
\frac1n\sum\limits_{i=1}^n \ell(x_i,y_i).\label{eq-EM-def}
\end{equation}
Since the testing risk is evaluated on finite size samples, its evaluation may be sensitive to sampling noise error. 
%giving rise to the issue of generalization to access the performance of the classifier learnt. It can be argued, that a key component of high-dimensional learning is not just the development of a representation model on the basis of finite experience, but its use for generalizing to unseen experience. 
The gap, to be defined next, is a measure of how an encoder-decoder pair could perform on unseen data (at training time) contained in the testing set $\mathcal{S}_n$.
\begin{defn}[Error gap]
	Given a stochastic encoder $q_{U|X}:\mathcal{X}\rightarrow\mathcal{P}(\mathcal{U})$ and decoder  $Q_{\hat{Y}|U}:\mathcal{U}\rightarrow\mathcal{P}(\mathcal{Y})$, the \emph{error gap} is given by 
	\begin{equation}
	\mathcal{E}_{\textrm{gap}}(q_{U|X},Q_{\hat{Y}|U},{\mathcal{S}_n}) \coloneqq\left| \mathcal{L}_{\text{emp}}(q_{U|X},Q_{\hat{Y}|U},{\mathcal{S}_n}) - \mathcal{L}(q_{U|X},Q_{\hat{Y}|U}) \right|,  \label{def-gap}
	\end{equation}
	which quantifies the error associated to $(q_{U|X},Q_{\hat{Y}|U})$ when $\mathcal{L}_{\text{emp}}({q_{U|X},Q_{\hat{Y}|U}},{\mathcal{S}_n} )$ is considered an estimate of $\mathcal{L}(q_{U|X},Q_{\hat{Y}|U})$ (calculated with the testing distribution). 
\end{defn}
\begin{rem}
	The definition of this gap should not be confused with the typical one used for PAC-style bounds \citep{Devroye97a}. The latter is computed w.r.t. the  training dataset while here we will study a deviation bound on the testing set, i.e., after the training stage has been accomplished. Our focus is reasonable for scenarios where the testing statistics evolves over time and may not match the training distribution.
\end{rem}

\section{Information Theoretic Bounds on the Gap}\label{sec:mainbound}

In this section, we first present our main result in Theorem~\ref{thm:regularizar}, which is a bound on the  gap~\eqref{def-gap} with probability at least $1-\delta$, as a function of a fixed randomized encoder-decoder pair $(q_{U|X},Q_{\hat{Y}|U})$. In particular, we show that the mutual information between the input raw data and its representation controls the gap with a scaling $\mathcal{O}\left(\frac{\log(n)}{\sqrt{n}}\right)$, which leads to a so-called informational deviation error bound. The information measures to be used in the work are~\cite{cover}: \emph{Kullback-Leibler (KL) divergence} $\mathcal{D}( {p}_{X}\| {q}_{X})\coloneqq\mathbb{E}_{p_{X}}\left[\log \frac{p_{X}(X)}{q_X(X)}\right]$; \emph{conditional KL divergence} $
\mathcal{D}( {p}_{Y|X}\| {q}_{Y|X}| {p}_{X})\coloneqq \mathbb{E}_{ {p_X}}\left[ \mathcal{D}\big( {p}_{Y|X}(\cdot| X)\| {q}_{Y|X}(\cdot| X)\big) \right]$ and \emph{mutual information} $\mathcal{I}(p_X;{p}_{Y|X})\coloneqq\mathcal{D}( {p}_{Y|X} \| {{p}_Y} | p_X)$. In this section, we will make use of the following assumptions:
\begin{assumptions}\label{asumption1}
	We assume $\mathcal{X}=\text{Supp}(p_X)$ and $P_{Y}(y_{\min}) \coloneqq \min_{y\in\mathcal{Y}} P_Y(y)>0$ without loss of generality because we can ignore the zero probability events. We assume that $\text{Vol}(\mathcal{U})<\infty$\footnote{The output encoder alphabet could be continuous. With this we are implying that it is bounded set. If $\mathcal{U}$ is discrete this assumption means that it is of finite cardinality.} and that the selected encoder-decoder pair $(q_{U|X},Q_{\hat{Y}|U})$ is such that $\displaystyle Q_{\hat{Y}|U}(y_{\min}|u_{\min})\coloneqq\inf_{\substack{y\in\mathcal{Y}\\u\in\mathcal{U}}}  Q_{\hat{Y}|U}(y|u)\geq \eta>0$. 
\end{assumptions} 

The assumption of $Q_{\hat{Y}|U}(y_{\min}|u_{\min})>0$ is typically valid for the soft-max decoder, since in practice its parameters never diverge. 

\begin{thm}[Information-theoretic bound]\label{thm:regularizar}
	For every $\delta\in(0,1)$, with probability at least $1-\delta$ over the choice of $\mathcal{S}_n\sim p_{XY}$,  the gap satisfies:
	\begin{align}
	\mathcal{E}_{\textrm{gap}}(q_{U|X},Q_{\hat{Y}|U},\mathcal{S}_n)&\leq\inf_{K\in\mathbb{N}}2\epsilon(K)+  A_\delta\sqrt{\mathcal{I}(p_X;q_{U|X})}\cdot\frac{\log(n)}{\sqrt{n}}r(K)\nonumber\\
	&\hspace{-0.1cm}+\frac{D_\delta\cdot \mathcal{D}_{\text{HL}}\left(Q^D_{Y|U}\|Q_{\hat{Y}|U}|q^D_U\right)+C_\delta} {\sqrt{n}}+\mathcal{O}\left(\frac{\log(n)}{n}\right),\label{eq-main-bound-GG}
	\end{align} 
	$\forall\,(q_{U|X},Q_{\hat{Y}|U})$ that meets Assumptions \ref{asumption1}, where $\mathcal{D}_{\text{HL}}$ is the Hellinger distance
	\begin{equation}\label{eq:hellinger}
	\mathcal{D}_{\text{HL}}\left(Q^D_{Y|U}\|Q_{\hat{Y}|U}|q^D_U\right)=\sqrt{\frac{1}{2}\cdot\mathbb{E}_{q^D_{U}}\left[\sum_{y\in\mathcal{Y}}\left(\sqrt{Q_{\hat{Y}|U}(y|U)}-\sqrt{Q^D_{Y|U}(y|U)}\right)^2\right]},
	\end{equation} 
	constants are defined as
	$A_\delta\coloneqq\sqrt{2}B_\delta$, $B_\delta\coloneqq\Big(1+\sqrt{\log\big(\frac{|\mathcal{Y}|+4}{\delta}\big)}\Big)$, $C_\delta\coloneqq 2\text{Vol}\left(\mathcal{U}\right)e^{-1}$ $+B_\delta\sqrt{|\mathcal{Y}|} \log \left(\frac{\text{Vol}\left(\mathcal{U}\right)}{P_{Y}(y_{\min})}\right)$, $D_\delta=Q_{\hat{Y}|U}^{-1/4}(y_{\min}|u_{\min})\sqrt{8\frac{|\mathcal{Y}|+4}{\delta}}$; and
	\begin{equation}
	\epsilon(K)=\sup_{\substack{k,x,y:\\1\leq k\leq K\\y\in\mathcal{Y}\\x\in\mathcal{K}_k^{(y)}}}\left|\ell(x,y)-\ell(x^{(k,y)},y)\right|,\qquad\qquad r(K)=\frac{1}{\displaystyle\min_{\substack{k,y:\\1\leq k\leq K\\y\in\mathcal{Y}}}\int_{\mathcal{K}_k^{(y)}}p_X(x)dx}.\label{eq:epsilon_r_definitions}
	\end{equation}
	where $\left(\{\mathcal{K}_k^{(y)}\}_{k=1}^K,\{x^{(k,y)}\}_{k=1}^K\right)_{y\in\mathcal{Y}}$ are $|\mathcal{Y}|$ partitions of $\mathcal{X}$ and their respective centroids. They are clearly functions of the natural number $K$ and such that for each $y\in\mathcal{Y}$:  $\bigcup_{k=1}^K\mathcal{K}_k^{(y)}=\mathcal{X},\;\mathcal{K}_{i}^{(y)}\cap\mathcal{K}_{j}^{(y)}=\emptyset\;\forall1\leq i<j\leq K,\;\text{Vol}(\mathcal{K}_k^{(y)})>0\;\forall 1\leq k\leq K$; and
	\begin{equation}
	q_U^D(u)=\sum_{k=1}^K\sum_{y\in\mathcal{Y}}q_{U|X}(u|x^{(k,y)})\int_{\mathcal{K}_k^{(y)}}p_{XY}(x,y)dx, 
	\end{equation}	
	\begin{equation}
	Q_{Y|U}^D(y|u)=\frac{\sum_{k=1}^Kq_{U|X}(u|x^{(k,y)})\int_{\mathcal{K}_k^{(y)}}p_{XY}(x,y)dx}{q^D_U(u)}\label{eq:Qy|u_D}
	\end{equation}
	are distributions functions induced by the quantization of the testing distribution $p_{XY}(x,y)$ by the above mentioned partitions.
\end{thm}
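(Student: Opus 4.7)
My plan is to proceed in three stages: (i) discretize the input via the provided partitions to reduce the problem to a finite sum; (ii) re-express both risks through the induced joint distribution $q^D_U\,Q^D_{Y|U}$; and (iii) apply two concentration arguments---one for the decoder mismatch, controlled by the Hellinger distance, and one for the cell-frequency deviation, whose variance I would bound by the mutual information via a change-of-measure argument.

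The discretization is the easy first step. Because $x_i\in\mathcal{K}_{k_i}^{(y_i)}$ implies $|\ell(x_i,y_i)-\ell(x^{(k_i,y_i)},y_i)|\leq\epsilon(K)$, applying this bound both inside the empirical average and under $p_{XY}$ produces the $2\epsilon(K)$ term and reduces the problem to bounding the deviation of $\sum_{k,y}\hat\pi_k^{(y)}\ell(x^{(k,y)},y)$ from $\sum_{k,y}\pi_k^{(y)}\ell(x^{(k,y)},y)$, where $\pi_k^{(y)}=\int_{\mathcal{K}_k^{(y)}}p_{XY}(x,y)\,dx$ and $\hat\pi_k^{(y)}=N_k^{(y)}/n$. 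The quantized true risk equals the cross-entropy under $q^D_U\,Q^D_{Y|U}$, so it decomposes as $H(Q^D_{Y|U}\,|\,q^D_U)+\mathcal{D}(Q^D_{Y|U}\|Q_{\hat Y|U}\,|\,q^D_U)$. The KL term is then converted into a Hellinger-based contribution via an inequality of the form $\mathcal{D}(P\|Q)\lesssim \eta^{-1/2}\mathcal{D}_{\text{HL}}^2(P,Q)$, valid under $Q_{\hat Y|U}\geq\eta$; concentration of the empirical version of this term together with a union bound over $y\in\mathcal{Y}$ yields the $D_\delta\,\mathcal{D}_{\text{HL}}/\sqrt n$ contribution, with the quartic-root factor in $D_\delta$ traced back to the $\eta^{-1/2}$ above.

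The remaining task is to bound the centred sum $\sum_{k,y}(\hat\pi_k^{(y)}-\pi_k^{(y)})\,c_{k,y}$ with $c_{k,y}=\ell(x^{(k,y)},y)$. I would view this as the empirical-minus-expectation of the bounded random variable $g(X,Y)=c_{k(X,Y),Y}$ and apply a Bennett/Bernstein inequality, producing a $\sqrt{\text{Var}(g)/n}$ term plus a lower-order $\log(n)/n$ correction. To express $\text{Var}(g)$ in terms of mutual information, I would centre each $c_{k,y}$ around the reference cross-entropy $-\mathbb{E}_{q^D_U}[\log Q^D_{Y|U}(y|U)]$ and invoke a change of measure from $q_{U|X}(\cdot|x^{(k,y)})$ to $q^D_U$: for any bounded $f$, $|\mathbb{E}_{q_{U|X}(\cdot|x^{(k,y)})}f-\mathbb{E}_{q^D_U}f|\lesssim \|f\|_\infty\,\sqrt{\mathcal{D}(q_{U|X}(\cdot|x^{(k,y)})\|q^D_U)}$, where $\|f\|_\infty$ is controlled by $\log(\text{Vol}(\mathcal{U})/P_Y(y_{\min}))$ after a $\log n$-level truncation of the otherwise unbounded log-loss. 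The $\pi_k^{(y)}$-weighted average of these squared KLs is precisely the conditional KL $\mathcal{D}(q_{U|X}\|q^D_U\,|\,\pi)$, which by the data-processing inequality is at most $\mathcal{I}(p_X;q_{U|X})$. The ratio $r(K)$ enters when the union bound is applied over cells (the smallest cell governs the worst cell-frequency deviation), the $\log n$ comes from the truncation, and the constants $A_\delta,B_\delta$ bundle Bennett's constants with the $\sqrt{\log((|\mathcal{Y}|+4)/\delta)}$ union-bound factor.

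The main obstacle will be the variance-to-mutual-information conversion: one needs the chain $\text{Var}(g)\to$ cell-wise KL $\to$ conditional KL $\to \mathcal{I}(p_X;q_{U|X})$ to yield the true, continuous mutual information rather than a quantization-dependent surrogate, while keeping the dependence on $\log n$ (from the truncation) and $1/\sqrt n$ (from Bennett) correctly aligned. A further subtlety is the Hellinger-to-KL step producing the quartic-root dependence on $\eta$ in $D_\delta$; this likely needs a two-stage bound (e.g.\ Bretagnolle--Huber combined with a Pinsker-type refinement under $Q_{\hat Y|U}\geq\eta$) rather than a single inequality. Because each step trades off the size of $K$ against $\epsilon(K)$ and $r(K)$, the final bound is naturally optimized over $K\in\mathbb{N}$ as written.
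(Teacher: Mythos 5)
Your stage (i) coincides with the paper's first step (a triangle inequality over the per-class partitions, giving the $2\epsilon(K)$ term), and your instinct for the decoder term --- concentrate an empirical decoder-mismatch statistic and control its fluctuation by the Hellinger distance --- is the right one. But the inequality you name is not the one that is needed. The population KL $\mathcal{D}(Q^D_{Y|U}\|Q_{\hat{Y}|U}|q^D_U)$ does not have to be bounded by Hellinger at all: it appears in both the quantized population risk and (in expectation) the quantized empirical risk, so what must be controlled is the deviation $d(\mathcal{S}_n)$ of the empirical mean of $T(x,y)=\mathbb{E}_{q_{U|X}}\left[\log\left(Q^D_{Y|U}(y|U)/Q_{\hat{Y}|U}(y|U)\right)\,\middle|\,X=x\right]$ from that KL. The paper does this with Chebyshev's inequality combined with a Ghosal-type \emph{variance} bound, $\mathrm{Var}\big(\log(Q^D_{Y|U}/Q_{\hat{Y}|U})\big)\le 8\,Q_{\hat{Y}|U}^{-1/2}(y_{\min}|u_{\min})\,\mathcal{D}^2_{\text{HL}}$, obtained from the pointwise inequality $x^2\le 4e^{-c/2}(e^{x/2}-1)^2$ for $x\ge c$, $c\le 0$; that is exactly where the quartic root in $D_\delta$ comes from. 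A KL-versus-Hellinger inequality, or Bretagnolle--Huber plus a Pinsker-type refinement as you suggest, yields neither a variance bound nor the linear-in-$\mathcal{D}_{\text{HL}}$, $1/\sqrt{n}$ shape of the theorem.

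For the main term your Bernstein/change-of-measure route is genuinely different from the paper's (which bounds the quantized gap by an entropy difference plus $\mathcal{D}(\hat{P}^D_{XY}\|P^D_{XY})$, controls the entropy differences with Shamir-style perturbation lemmas involving $\phi(x)=-x\log x$ and $\mathbb{V}(\mathbf{q}_{U|X}(u|\cdot))$, and then applies McDiarmid and the method of types), but as sketched it has two concrete holes. First, the sup-norm of $-\log Q^D_{Y|U}(y|\cdot)$ that your change-of-measure step requires is not finite under Assumptions~\ref{asumption1}: only the chosen decoder $Q_{\hat{Y}|U}$ is bounded below by $\eta$, not the induced decoder $Q^D_{Y|U}$, and the claim that a ``$\log n$-level truncation'' leaves precisely the constant $\log\left(\mathrm{Vol}(\mathcal{U})/P_Y(y_{\min})\right)$ is asserted rather than proved; in the paper that constant and the $\log n$ factor arise elsewhere (the maximal differential entropy $\log\mathrm{Vol}(\mathcal{U})$ on a bounded $\mathcal{U}$, a Taylor expansion of $\mathcal{H}(P_Y)$ with $\|\log\mathbf{P}_Y\|_2$, and the bound $\phi(a/\sqrt{n})\le \frac{a}{2}\log(n)/\sqrt{n}+e^{-1}/\sqrt{n}$). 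Second, $r(K)$ cannot enter through ``a union bound over cells'': in your scheme there is no per-cell event (you concentrate a single empirical mean), and a union bound would produce a $\log K$-type factor, not $1/\min_{k,y}\int_{\mathcal{K}_k^{(y)}}p_X$. In the paper $r(K)$ appears at the precise point where $\sqrt{\mathbb{V}(\mathbf{q}_{U|X}(u|\cdot))}$ is rewritten as $q^D_U(u)$ times a sum normalized by the cell probabilities $P^D_X$, after which Pinsker and Jensen give $\sqrt{2}\,r(K)\sqrt{\mathcal{I}(P^D_X;q_{U|X})}$ and the data-processing inequality replaces $\mathcal{I}(P^D_X;q_{U|X})$ by $\mathcal{I}(p_X;q_{U|X})$. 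Until you supply a mechanism that both tames the unbounded log-loss and produces the $r(K)\sqrt{\mathcal{I}(p_X;q_{U|X})}\,\log(n)/\sqrt{n}$ shape, your plan does not close.
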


The proof is relegated to the Appendix. This bound has some important terms which are worth analyzing:
\begin{itemize}
    \item Encoder and decoder $(q_{U|X},Q_{\hat{Y}|U})$:  the encoder and the decoder  in Theorem \ref{thm:regularizar} are considered to be given. However, without loss of generality they can correspond to the learned encoder and decoder  during training time. In this case, the bound in~\eqref{eq-main-bound-GG} should be further averaged with respect to the randomness of the training samples --according to the training distribution-- and the training algorithm itself.
	\item $\mathcal{I}(p_X;q_{U|X})$: Mutual information between raw data $X$ and its randomized representation $U$ is a regularization term used to reduce the overfitting, that some authors understand as ``measure of information complexity'' \citep{2016arXiv161101353A, DBLP:journals/corr/AlemiFD016,8379424}. Theo. \ref{thm:regularizar} is a first step in order to explain how and why this effect happens. This term presents a scaling rate of $n^{-1/2}\log(n)$ and it is the most important term of our deviation bound. In Section \ref{sec:experimentos} we present a empirical analysis in order to study its importance.
	%Note that when the  decoder induced by the randomized encoder $Q_{\hat{Y}|U}=Q_{Y|U}$ is selected, a Bottleneck effect (Definition \ref{def-IB-algo}) appears: while the true risk becomes in $\mathcal{H}({Q}_{Y|U}|q_U)$, the error gap (who is related with the overfitting) is controlled by the mutual information $\mathcal{I}(p_X;q_{U|X})$. 
	\item $\mathcal{D}_{\text{HL}}\left(Q^D_{Y|U}\|Q_{\hat{Y}|U}\right)$: Hellinger distance  could be seen as a measure of the decoder efficiency in comparison with the decoder $Q^D_{Y|U}$, which is induced by the randomized encoder $q_{U|X}$ and the quantized testing distribution. When $Q_{\hat{Y}|U}=Q^D_{Y|U}$ this term is zero, suggesting that this selection minimizes the error gap. In this way, we are interested in decoders with sufficient freedom degrees as the soft-max decoder.
	\item $\epsilon(K)$ and $r(K)$: Motivated by robust quantization \citep{Xu2012}, these functions define, for each $y\in\mathcal{Y}$, an artificial discretization of $\mathcal{X}$ space into cells (partition element). This discretization allow us to introduce some information-theoretic techniques and results for discrete alphabets during the proof of our result. While $\epsilon(K)$ is associated with the robustness of the loss function over the partition element, $r(K)$ is the minimum probability of falling into a cell. There is a tradeoff between these: while $\epsilon(K)$ is a decreasing function (when the number of cells is increased, they may be smaller), $r(K)$ is a increasing one (smaller cells enclose less probability).
	\item $Q_{\hat{Y}|U}(y_{\min}|u_{\min})$: The maximum of the loss function value could be a poor choice to generate a deviation bound with dependence on the decoder. In our case, a more sensible approach is followed by using the Hellinger distance between $Q_{\hat{Y}|U}$ and $Q^D_{Y|U}$. It can be seen that this decoder dependent term will not be relevant in two scenarios: when decoder selection $Q_{\hat{Y}|U}$ is close to $Q^D_{Y|U}$ (in a Hellinger distance sense) and when $Q_{\hat{Y}|U}(y_{\min}|u_{\min})$ is small enough (through $D_\delta$). The consideration of these two possibilities and the $1/\sqrt{n}$ scaling of this term could justify to disregarding it when the number of samples is large enough.  
	\item $\text{Vol}(\mathcal{U})$: Note that if ReLU activations are implemented (whose volume is limited for bounded entries), $\text{Vol}(\mathcal{U})$ is expected to be larger than for the case of sigmoid activations. As a consequence, the mutual information will have a major influence in the generalization with saturated activations. This observation matches with \citet{saxe2018} analysis.
\end{itemize}

\section{Experimental Results}
\label{sec:experimentos}

\begin{figure}
	\centering
	\includegraphics[scale=0.48]{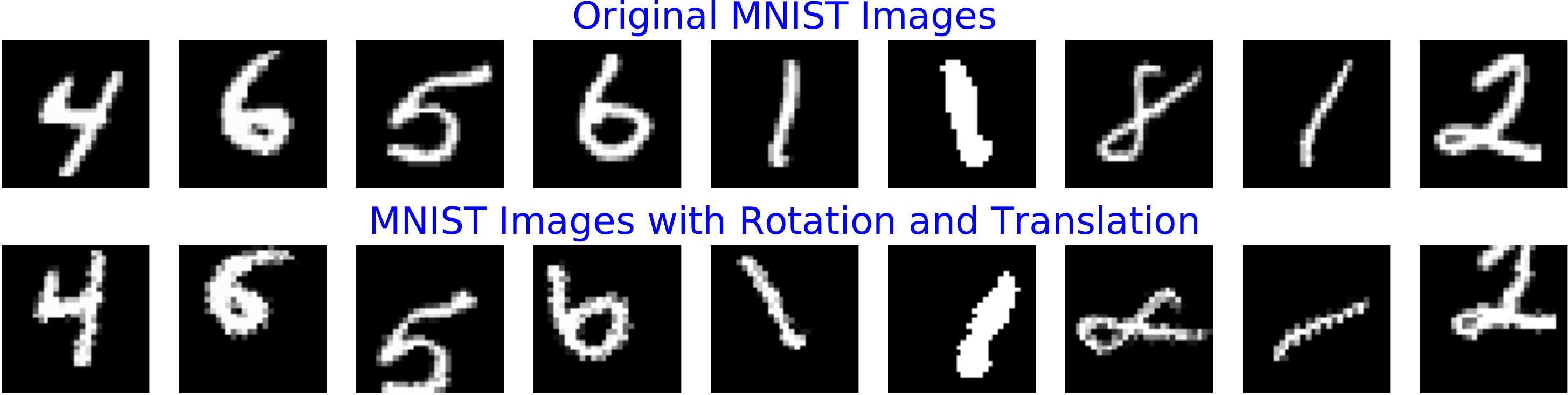}
	\caption{Comparison between original MNIST dataset and our propose with rotations and translations}
	\label{fig:mnistperturbada}	
\end{figure}

In this section, we experimentally check our bound in some stochastic models used in practice. We will show that the mutual information is representative of the gap behavior, i.e. we compare, after a training stage, a $\mathcal{E}_{\textrm{gap}}(q_{U|X},Q_{\hat{Y}|U},\mathcal{S}_n)$ quantile (the magnitude to be bounded) with $A\cdot\sqrt{\mathcal{I}(p_X;q_{U|X})}+C$, where $A$ and $C$ are universal constants representative of the corresponding ones in Theorem \ref{thm:regularizar}. The magnitudes are compared for several rules $Q$ indexed by a Lagrange multiplier defined in each experiment. That is, we first select an encoder $q_{U|X}$ and a decoder $Q_{\hat{Y}|U}$ based on a Lagrange multiplier $\lambda$ in the cost function and the training set, and then we evaluate the risk deviation based on independent features to those of used during the training, i.e.,  according to the testing dataset $\mathcal{S}_n$. We focus on the qualitative behavior, so we plot with different axes for the quantile gap and mutual information in order to get rid of the mentioned constants.

We will show that the mutual information is representative of the behavior of said gap, even when the distribution with which the test samples are generated does not match with the training law. As our main goal is not to present a new classification methodology, with competitive  results with state-of-the-art methods, we restrict ourself to use small databases, as motivated in \citet{Neyshabur2017GeometryOO} work. We sample a random subset of MNIST (standard dataset of handwritten digits). The size of the training set is $5K$ and the algorithms will be tested with standard MNIST dataset and with a disturbed version with translations and rotations. Random translations are drawn from an uniform distribution between -5 and 5 (quantized) for each axis and random rotations are drawn from an uniform distribution $(-\frac{\pi}{4},\frac{\pi}{4})$ for the angle, as can be seen in Fig. \ref{fig:mnistperturbada}. Experiments with CIFAR-10 dataset (natural images) can be seen in Appendix. 

In most applications, alphabets are such that $\mathcal{X}\subset\mathbb{R}^d$,  $\mathcal{U}\subset\mathbb{R}^m$, where $d$ is the number of input units and $m$ is the number of hidden units of the auxiliary variable, so we refer to the vector random variables $\mathbf{X}, \mathbf{U}$ respectively. We approximate $\mathcal{L}(q_{U|X},Q_{\hat{Y}|U})$ with a $5K$ dataset and $\mathcal{L}_{\text{emp}}(q_{U|X},Q_{\hat{Y}|U},{\mathcal{S}_n})$ with different independent mini-testing datasets of $100$ samples, i.e., using the rest of the features. The $\mathcal{E}_{\textrm{gap}}(q_{U|X},Q_{\hat{Y}|U},\mathcal{S}_n)$ $0.95$-quantile ($\delta=0.05$) is computed based on the different values of the testing risk: $\mathcal{L}_{\text{emp}}(q_{U|X},Q_{\hat{Y}|U},{\mathcal{S}_n})$. Finally, there exists the difficulty of implementing a mutual information estimator. To this end, we make use of  the variational bound \citep{cover} to upper bound the mutual information by  $\mathcal{I}\left(p_\mathbf{X};q_{\mathbf{U}| \mathbf{X}}\right) \leq\mathcal{D}\left(q_{\mathbf{U}|\mathbf{X}} \big\| \tilde{q}_{\mathbf{U}}\big |p_{\mathbf{X}} \right)$, 
where $\tilde{q}_{\mathbf{U}}$ is an auxiliary prior pdf \citep{kingma2013auto,2016arXiv161101353A}. Consider a distribution $\tilde{q}_U$ a product distribution $\tilde{q}_{\mathbf{U}}(\mathbf{u})=\prod_{j=1}^{m}\tilde{q}_{U_j}(u_j)$. It is straightforward to check that 
\begin{equation}
\sqrt{\mathcal{I}\left(p_{\mathbf{X}};q_{\mathbf{U}| \mathbf{X}}\right)} \leq\sqrt{\sum_{j=1}^{m} \mathcal{D}\left(q_{U_j|\mathbf{X}}(\cdot|\mathbf{X}) \big\| \tilde{q}_{U_j}\big| p_{\mathbf{X}} \right)}.\!\!
\label{eq:inf_radius_opt}
\end{equation}
We will implement a parametric estimation of the KL divergence $\mathcal{D}\left(q_{U_j|\mathbf{X}}(\cdot|\mathbf{X}) \big\| \tilde{q}_{U_j}\big| \hat{P}_{\mathbf{X}} \right)$, where $\hat{P}_\mathbf{X}$ is the empirical pmf (KL estimation is an average over the sample), for each of the following architectures: normal encoder / normal prior \cite{kingma2013auto}, log-normal encoder / log-normal prior \cite{2016arXiv161101353A} and RBM encoder / $\frac{1}{n}\sum_{i=1}^nq_{U_j|\mathbf{X}}(u_j|\mathbf{x}_i)$ prior \cite{Hinton_guia}. We will refer to each of the examples by their encoders: Normal, log-Normal and RBM, respectively. Note that the mutual information does not depend on the decoder, so we use always a \emph{soft-max} output layer for simplicity. Experimental details can be seen in Appendix.

% The values reported in each simulation are the average of three simulations independent, choosing at random, in each case, different sets.

\subsection{Normal Encoder: Variational Classifier}\label{sec:normal_encoder}

\begin{figure}
	\centering
	\begin{subfigure}{.49\textwidth}
		\centering
		\includegraphics[scale=0.35]{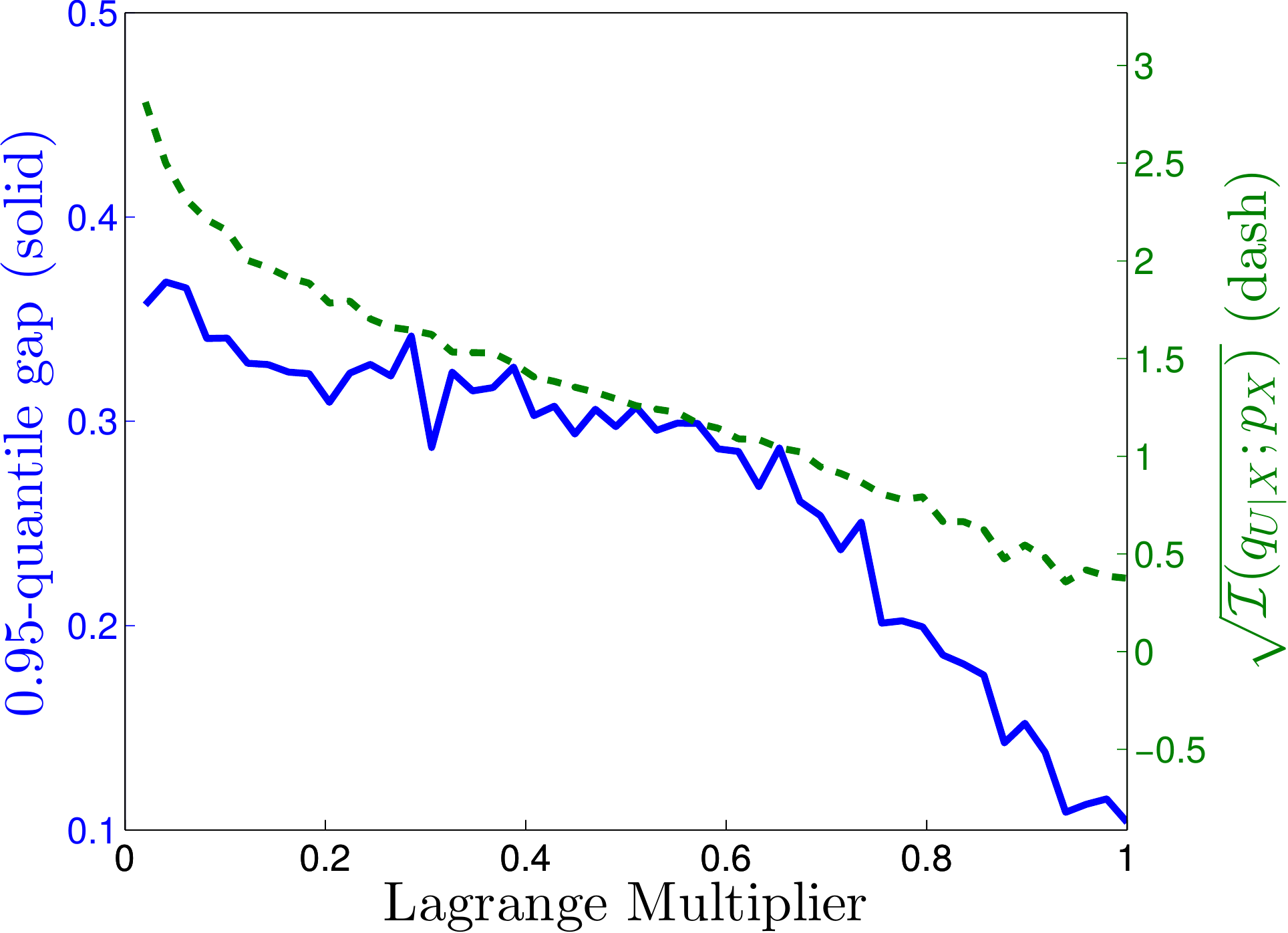}
		\caption{}
		\label{fig:normalmnist}
	\end{subfigure}
	\begin{subfigure}{.49\textwidth}
		\centering
		\includegraphics[scale=0.35]{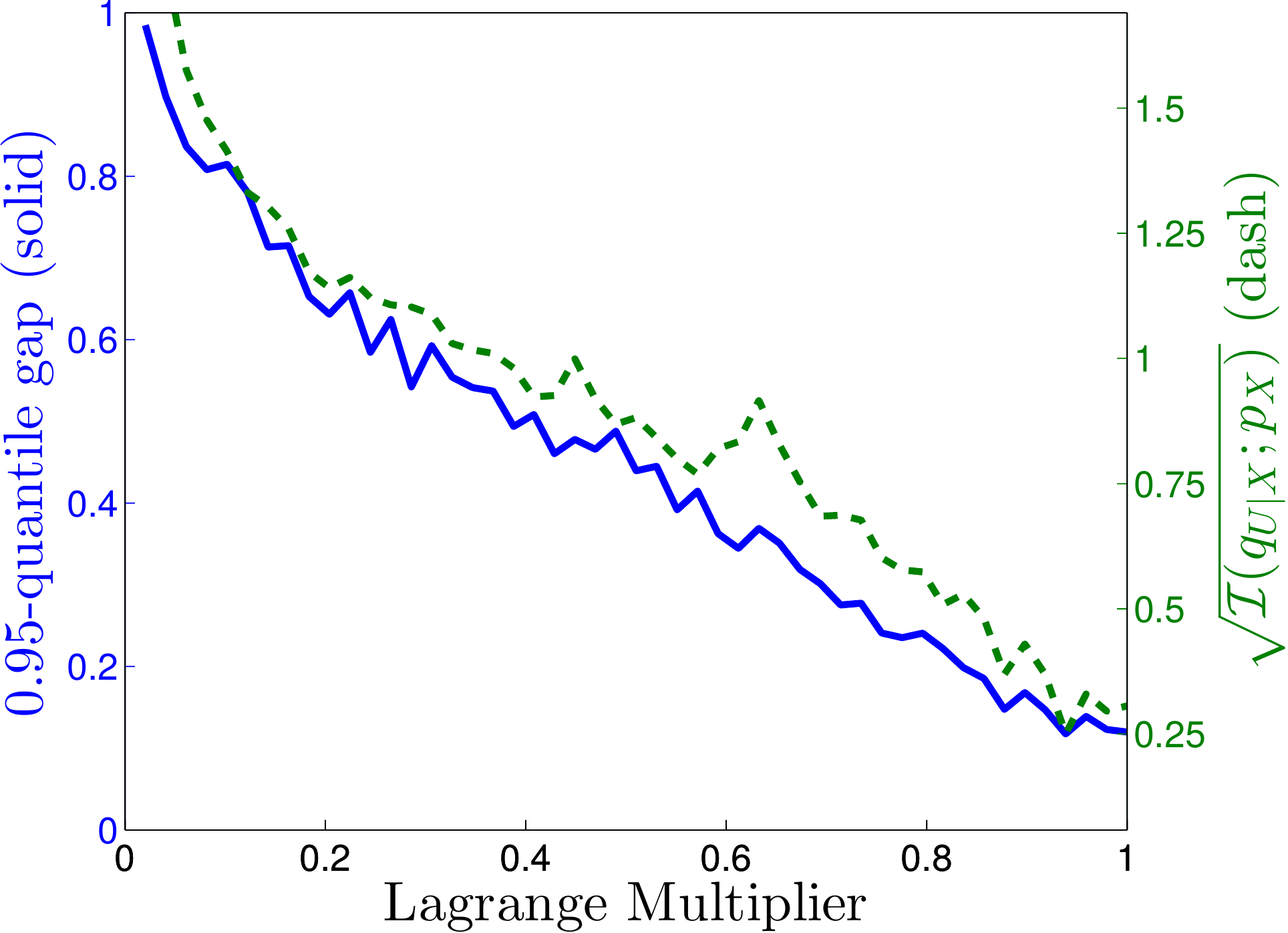}
		\caption{}
		\label{fig:normalnew}
	\end{subfigure}
	\caption{Comparison between $0.95$-quantile of $\mathcal{E}_{\textrm{gap}}(q_{U|X},Q_{\hat{Y}|U},\mathcal{S}_n)$ and the mutual information variational bound \eqref{eq:inf_radius_opt} for normal encoder and testing with: (a) Images generated with the training distribution,  (b) Images generated with other distribution.}
\end{figure}

Gaussian Variational Auto-Encoders (VAEs) introduce a normal encoder $U_j|_{\mathbf{X}=\mathbf{x}}\sim\mathcal{N}(\mu_j(\mathbf{x}),\sigma^2_j(\mathbf{x}))$ $j=[1:m]$, where $\mu_j(\mathbf{x})$ and $\log\sigma_j^2(\mathbf{x})$ are constructed via DNNs (vectorized), a standard normal prior $\tilde{U}_j\sim\mathcal{N}(0,1)$ and the decoder input is generated by simple sampling using the reparameterization trick \cite{kingma2013auto}. In this case each KL divergence in \eqref{eq:inf_radius_opt} can be estimated as follows:
\begin{equation}
\mathcal{D}\left(q_{U_j|\mathbf{X}}(\cdot|\mathbf{X}) \big\| \tilde{q}_{U_j}\big| \hat{P}_{\mathbf{X}} \right)=\frac{1}{2n}\sum_{i=1}^n\left(-\log\sigma^2_j(\mathbf{x}_i)+\sigma_j^2(\mathbf{x}_i)+\mu_j^2(\mathbf{x}_i)-1\right).
\end{equation} 
%The DNNs used was a feed-forward layer of $512$ hidden units with ReLU activation followed by another linear for each parameter ($\mu$ and $\log\sigma^2$) with $256$ hidden units. I.e., each parameter, $\mu$ and $\log\sigma^2$, are a two-layers network where the first one is common to both. We choose a learning rate in $0.001$, a batch-size of $100$ and we train during $200$ epochs. The cost function considered during the training phase was of the form,
%\begin{equation}\label{eq:normal-lognormal cost function}
%\mathcal{L}_{\text{emp}}(q_{U|X},Q_{\hat{Y}|U},\mathcal{D}_{l})+\lambda \sum_{j=1}^m\mathcal{D}\left(q_{U_j|\mathbf{X}}(\cdot|\mathbf{X}) \big\| \tilde{q}_{U_j}\big| \hat{P}_{\mathbf{X}} \right)
%\end{equation} 
%where $\lambda$ is the regulation Lagrange multiplier and $\mathcal{D}_{l}$ is the $l$-training dataset. 
Fig. \ref{fig:normalmnist} and \ref{fig:normalnew} show $0.95$-quantile  of $\mathcal{E}_{\textrm{gap}}(q_{U|X},Q_{\hat{Y}|U},\mathcal{S}_n)$ and the mutual information variational bound \eqref{eq:inf_radius_opt} as a function of the training Lagrange multiplier $\lambda$, testing with the training and disturbed MNIST dataset respectively. There is an decreasing tendency indicating the vanishing of possible overfitting when regularization is incremented. Complex tasks, such as testing with images sampled with another distribution Fig. \ref{fig:normalnew}, generate an error gap behavior closer to the mutual information.

\subsection{Log-Normal Encoder: Information Dropout}

Information dropout propose log-normal encoders $U_j=f_j(\mathbf{X})e^{\alpha_j(\mathbf{X})Z}$ $j=[1:m]$ where $Z\sim\mathcal{N}(0,1)$, where $f_j(\mathbf{x})$ and $\alpha_j^2(\mathbf{x})$ are constructed via DNNs (vectorized) and the decoder input is generated by simple sampling using the reparameterization trick. In \cite{2016arXiv161101353A}, authors recommend to use a log-normal prior $\tilde{U}_j\sim\log\mathcal{N}(\mu_j,\sigma_j^2)$ when $\mathbf{f}(\mathbf{x})=[f_1(\mathbf{x}),\cdots f_m(\mathbf{x})]$ is a DNN with soft-plus activation, where $\mu_j$ and $\sigma_j$ are variables to train.

As $U_j|_{\mathbf{X}=\mathbf{x}}\sim\log\mathcal{N}(\log f_j(\mathbf{x}),\alpha^2_j(\mathbf{x}))$ and the KL divergence is invariant under reparametrizations, the divergence between two log-normal distributions is equal to the divergence between the corresponding normal distributions. Therefore, using the formula for the KL
divergence of normal random variables \cite{cover}, we obtain
\begin{align}
\mathcal{D}\left(q_{U_j|\mathbf{X}}(\cdot|\mathbf{X}) \big\| \tilde{q}_{U_j}\big| \hat{P}_{\mathbf{X}} \right)&=\frac{1}{n}\sum_{i=1}^n\mathcal{D}\left(\mathcal{N}(\log f_j(\mathbf{x}_i),\alpha_j^2(\mathbf{x}_i))\|\mathcal{N}(\mu_j,\sigma_j^2)\right)\\
&=\frac{1}{n}\sum_{i=1}^n\frac{\alpha_j^2(\mathbf{x}_i)+(\log(f_j(\mathbf{x}_i))-\mu_j)^2}{2\sigma_j^2}-\log\frac{\alpha_j(\mathbf{x}_i)}{\sigma_j}-\frac{1}{2}.
\end{align} 

%The DNNs used for $\mathbf{f}(\mathbf{x})$ was a feed-forward structure with two layers of $256$ hidden units with a softplus activation and for $\boldsymbol{\alpha}(\mathbf{x})$ a feed-forward layer of $256$ hidden units with a sigmoid activation multiplied by $0.7$, so that the maximum variance of the log-normal error distribution will be approximately $1$ \cite{2016arXiv161101353A}. We choose a learning rate in $0.001$, a batch-size of $100$ and we train during $200$ epochs. The cost function trained was the same that the one in \eqref{eq:normal-lognormal cost function}. 

Fig. \ref{fig:lognormalmnist} and \ref{fig:lognormalnew} show $0.95$-quantile  of $\mathcal{E}_{\textrm{gap}}(q_{U|X},Q_{\hat{Y}|U},\mathcal{S}_n)$ and the mutual information variational bound \eqref{eq:inf_radius_opt} as a function of the training Lagrange multiplier $\lambda$, testing with the training and disturbed MNIST dataset respectively. Again, there is an decreasing tendency indicating the vanishing of possible overfitting when regularization is incremented. In this case the behaviors are pretty close because the prior distribution is also trained.

\begin{figure}
	\centering
	\begin{subfigure}{.49\textwidth}
		\centering
		\includegraphics[scale=0.35]{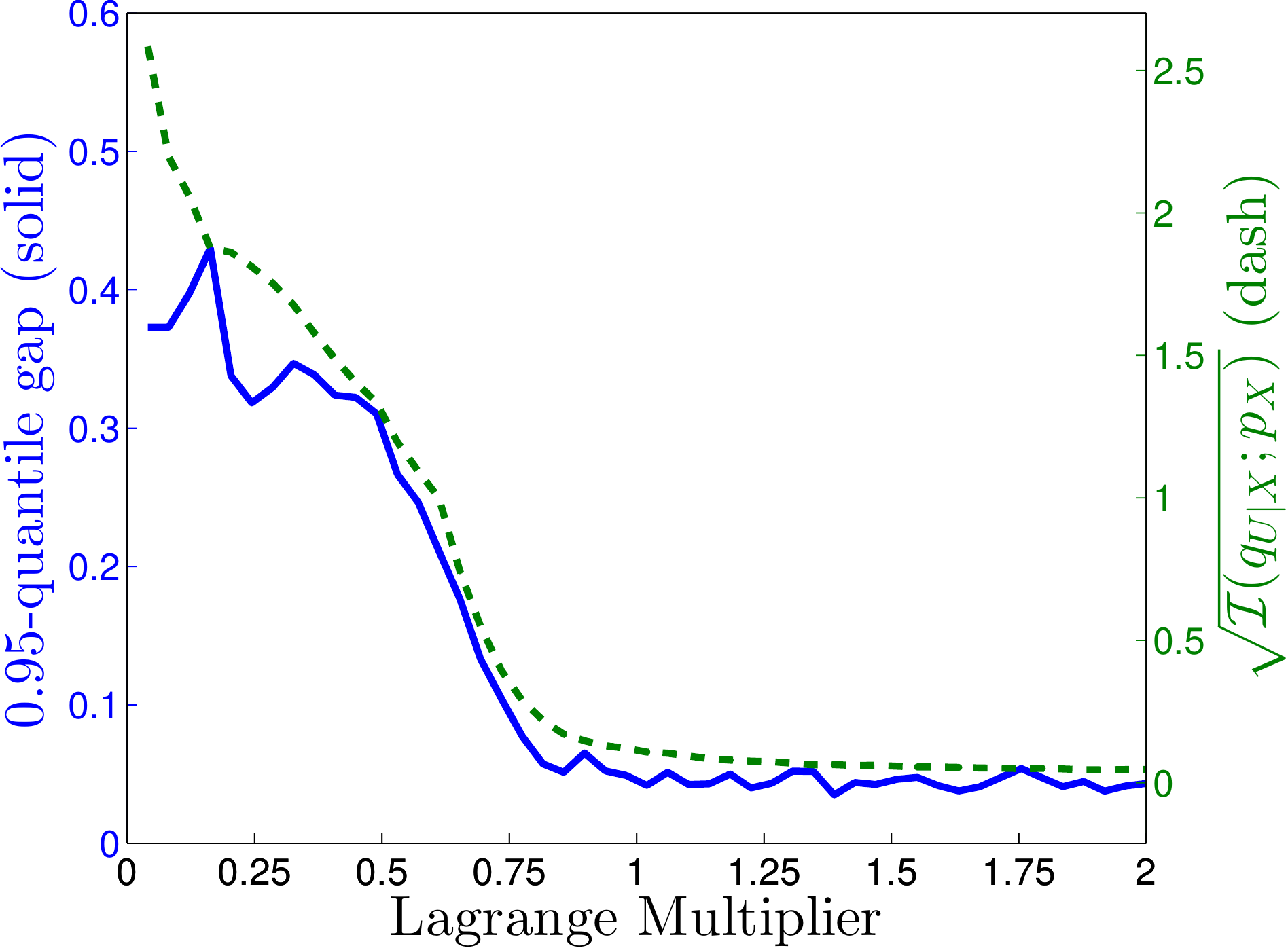}
		\caption{}
		\label{fig:lognormalmnist}
	\end{subfigure}
	\begin{subfigure}{.49\textwidth}
		\centering
		\includegraphics[scale=0.35]{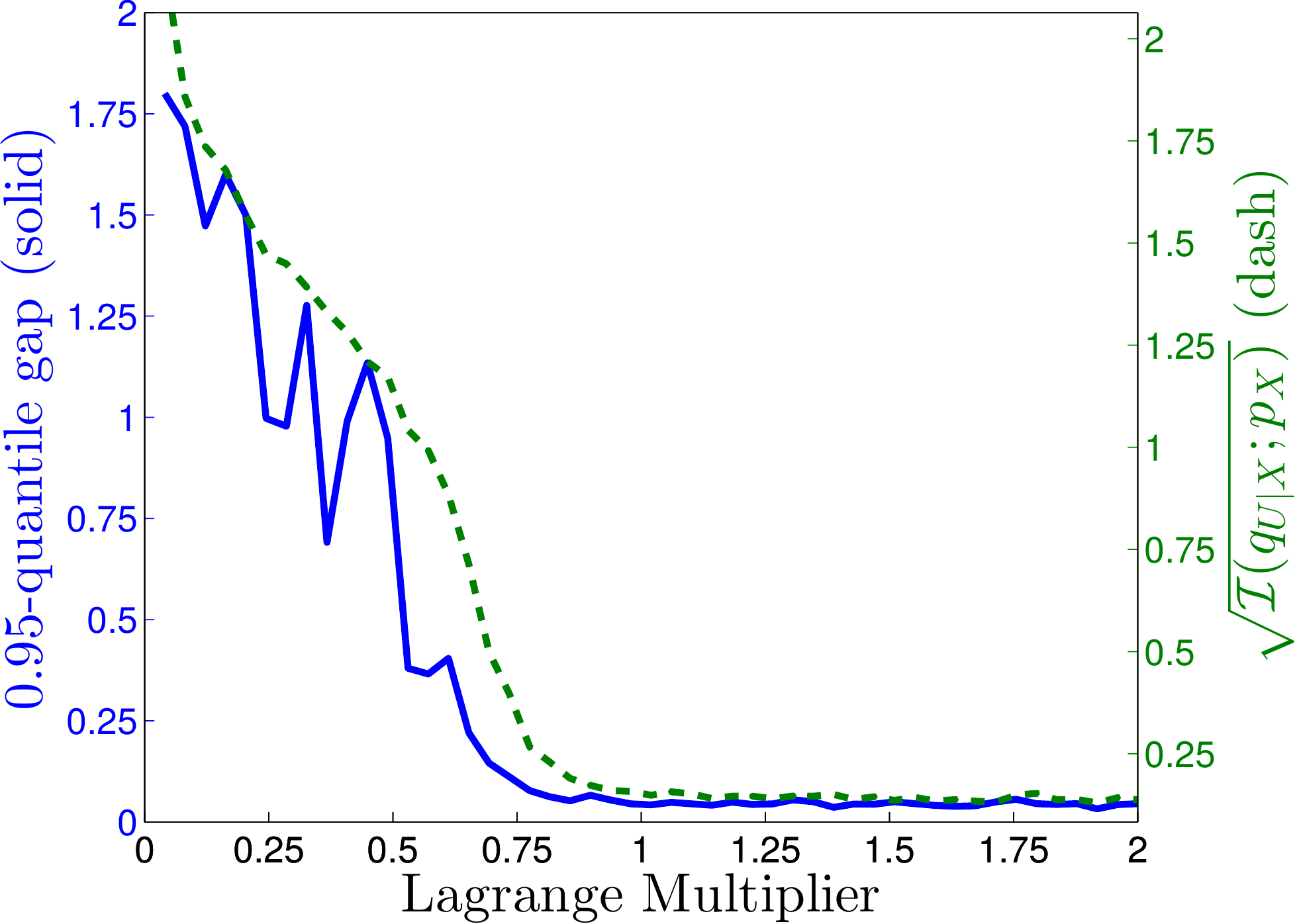}
		\caption{}
		\label{fig:lognormalnew}
	\end{subfigure}
	\caption{Comparison between $0.95$-quantile of $\mathcal{E}_{\textrm{gap}}(q_{U|X},Q_{\hat{Y}|U},\mathcal{S}_n)$ and the mutual information variational bound \eqref{eq:inf_radius_opt} for Log-Normal encoder and testing with: (a) Images generated with the training distribution,  (b) Images generated with other distribution.}
\end{figure}

\subsection{RBM Encoder: Classification using Restricted Boltzmann Machines}\label{sec:rbm_simulation}

Consider the standard models for the RBM studied in \cite{Hinton_guia,srivastava2014dropout}. For every $j\in[1:m]$, $U_j$ given $\mathbf{X}=\mathbf{x}$ is distributed as a \emph{Bernoulli} RV with parameter $\sigma(b_j+\mathbf{w}_j^T\mathbf{x})$ (sigmoid activation). Selecting the product distribution $
\tilde{q}_{U_j}(u_j)=\frac{1}{n}\sum_{i=1}^nq_{U_j|\mathbf{X}}(u_j|\mathbf{x}_i)
$, we obtain
\begin{align}
\mathcal{D}&\left(q_{U_j|\mathbf{X}}(\cdot|\mathbf{X}) \big\| \tilde{q}_{U_j}\big| \hat{P}_{\mathbf{X}} \right)=\frac{1}{n}\sum_{i=1}^n\sigma(b_j+\langle\mathbf{w}_j,\mathbf{x}_i\rangle)\log\left(\frac{\sigma(b_j+\langle\mathbf{w}_j,\mathbf{x}_i\rangle)}{\frac{1}{n}\sum_{k=1}^n\sigma(b_j+\langle\mathbf{w}_j,\mathbf{x}_k\rangle)}\right)\nonumber\\
&\qquad\qquad+\left(1-\sigma(b_j+\langle\mathbf{w}_j,\mathbf{x}_i\rangle)\right)\log\left(\frac{1-\sigma(b_j+\langle\mathbf{w}_j,\mathbf{x}_i\rangle)}{\frac{1}{n}\sum_{k=1}^n1-\sigma(b_j+\langle\mathbf{w}_j,\mathbf{x}_k\rangle)}\right).\label{eq:rbm}
\end{align}
\begin{figure}
	\centering
	\begin{subfigure}{.49\textwidth}
		\centering
		\includegraphics[scale=0.35]{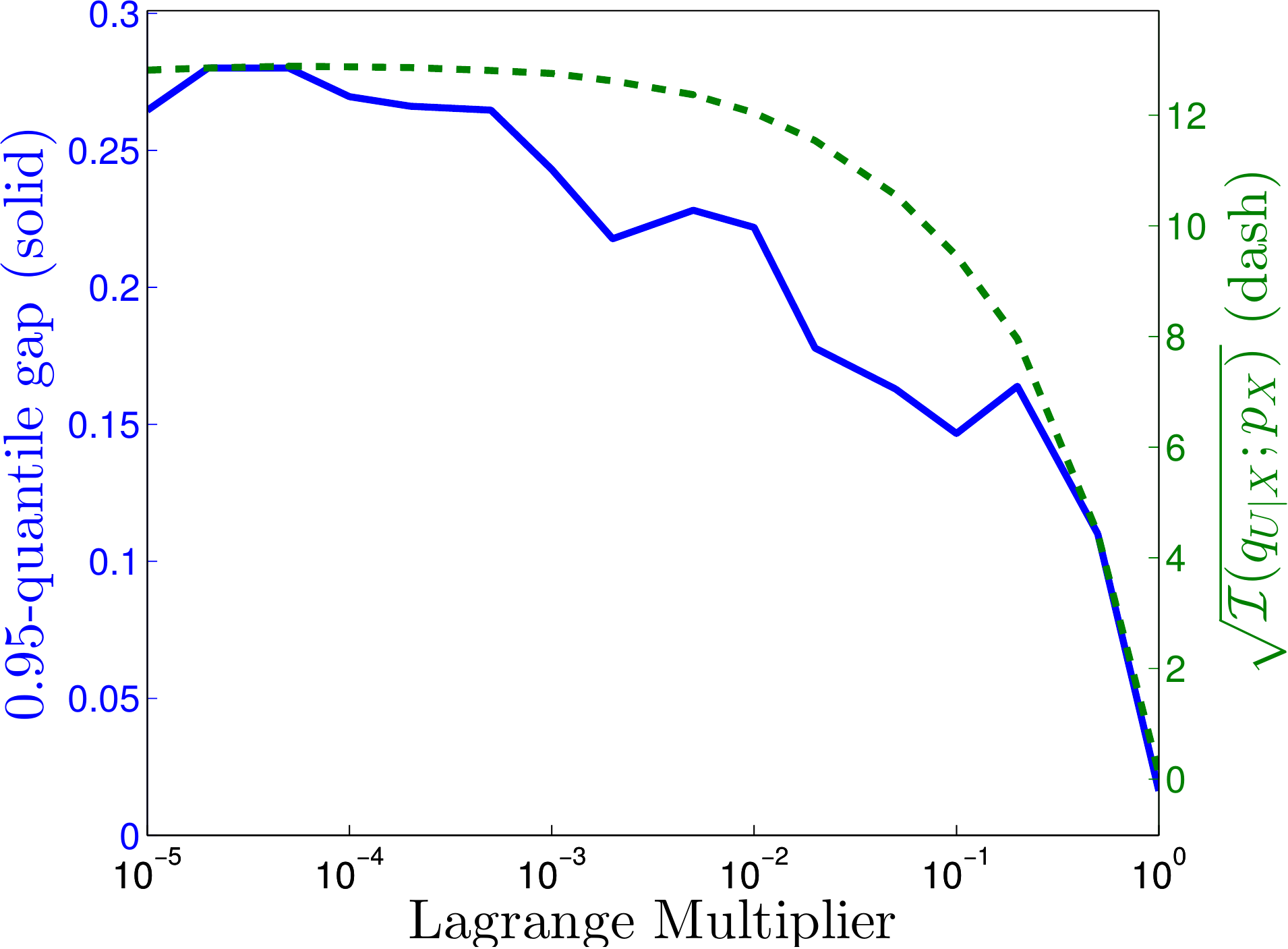}
		\caption{}
		\label{fig:rbmmnist}
	\end{subfigure}
	\begin{subfigure}{.49\textwidth}
		\centering
		\includegraphics[scale=0.35]{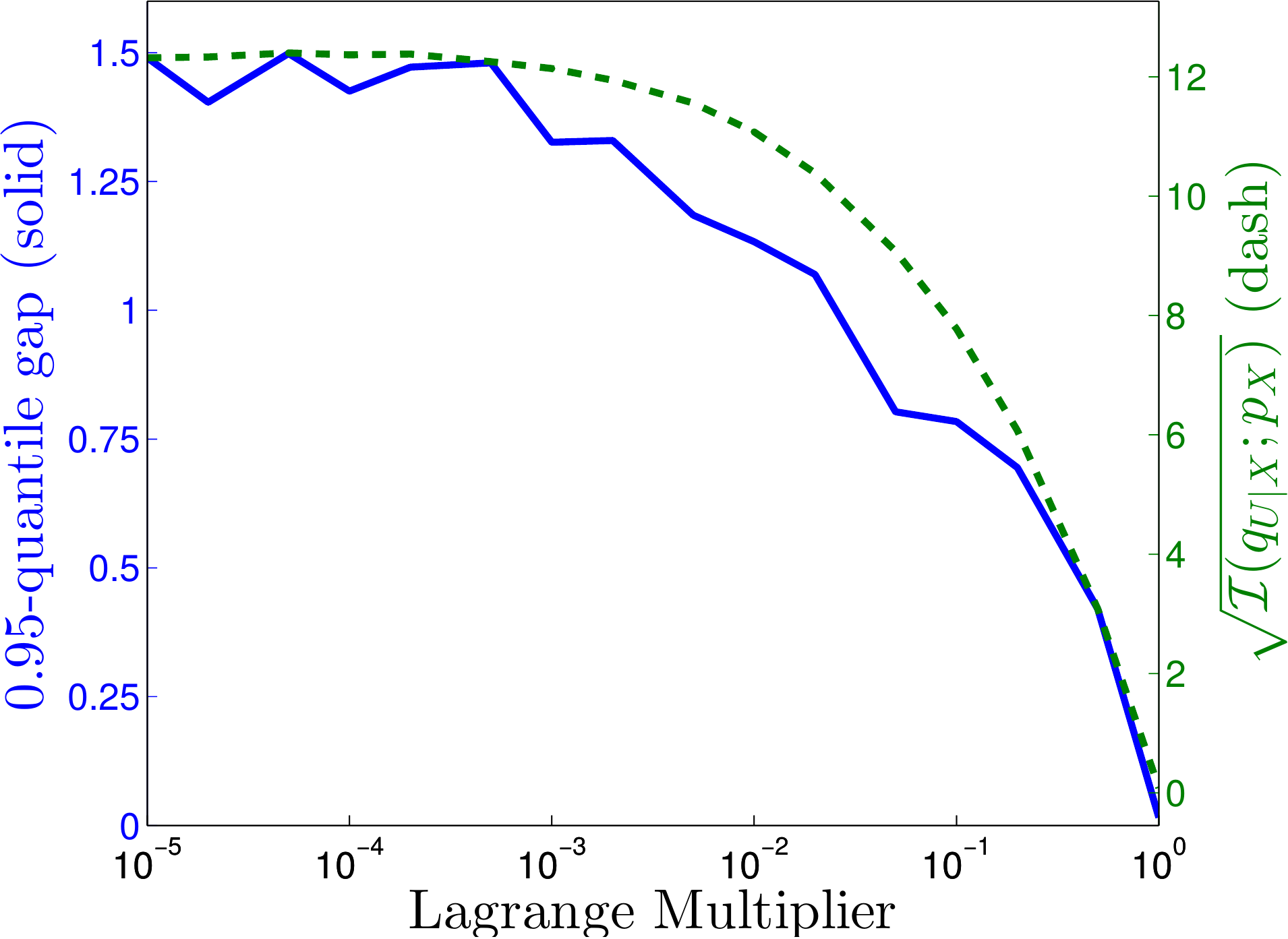}
		\caption{}
		\label{fig:rbmnew}
	\end{subfigure}
	\caption{Comparison between $0.95$-quantile of $\mathcal{E}_{\textrm{gap}}(q_{U|X},Q_{\hat{Y}|U},\mathcal{S}_n)$ and the mutual information variational bound \eqref{eq:inf_radius_opt} for RBM encoder and testing with: (a) Images generated with the training distribution,  (b) Images generated with other distribution.}
	\label{fig:rbm}
\end{figure}

Figures \ref{fig:rbmmnist} and \ref{fig:rbmnew} show $0.95$-quantile  of $\mathcal{E}_{\textrm{gap}}(q_{U|X},Q_{\hat{Y}|U},\mathcal{S}_n)$ and the mutual information variational bound \eqref{eq:inf_radius_opt} as a function of the Lagrange multiplier $\lambda$, testing with the training and disturbed MNIST dataset respectively. Again, the behaviors are similar and decreasing, both when testing with samples generated with the training distribution and with samples generated with another one.

\section{Summary and Concluding Remarks}
\label{sec:conclusion}

We presented a theoretical investigation of a typical classification task in which we have training data from a source domain, but we wish the testing gap between the empirical cross-entropy and its statistical expectation (measured with respect to a possible different testing probability law) to be as small as possible.  Our main result (Theorem~\ref{thm:regularizar}) is that the testing gap  can be bounded with high probability by the mutual information between the input testing samples and the corresponding representations, the Hellinger distance which measures the decoder efficiency and other less relevant constants. Empirical study of this metric suggests that the mutual information may be a good measure to capture the dynamic of the gap with respect to important training parameters. We finally presented a simple experimental setup which shows that there is a strong correlation between the gap represented and the mutual information between the raw inputs and their representations. Some further work will be needed to provide strong support to these numerical results in presence of other sources of non-stationarities between training and testing datasets.

%The random decomposition of the stochastic rule allowed us to find a new relationship between neural networks and the mutual information, and expand the theory to encompass stochastic models such as Restrict Boltzmann Machine (RBM) and Variational Auto-Encoder (VAE).

%A natural continuation would be to find a PAC-style bound that involves mutual information. It would also be interesting to study the decoder efficiency and the discretization tradeoff more thoroughly.

%\newpage

\section*{Acknowledgment}
   
The work of Prof. Pablo Piantanida was supported by the European Commission’s Marie Sklodowska-Curie Actions (MSCA), through the Marie Sklodowska-Curie IF (H2020-MSCAIF-2017-EF-797805). The work of Matias Vera was supported by Peruilh PhD Scholarship from Facultad de Ingenier\'ia, Universidad de Buenos Aires.

\bibliographystyle{apalike}
\bibliography{mybibfile}

\newpage

\begin{figure*}[ht]
	\centering
	{\LARGE\textbf{Appendix}}
\end{figure*}

\appendix

\subsubsection*{Notation and conventions}
  
Let $\mathcal{P}(\mathcal{X})$ denote the set of all probability measures $\mathbb{P}$ over the set $\mathcal{X}$. The probability mass (pmf) function or probability density function (pdf) in the case of continuous random variables of $X$ is denoted interchangeably by $P_X$ or $p_X$. $\text{Supp}(p_X)$ denotes the support of the distribution, i.e. the closure of $\{x\in\mathcal{X}: p_X(x)>0\}$. $\text{Vol}\left(\mathcal{X}\right)=\int_{\mathcal{X}}dx\;$ if $\;\mathcal{X}$ is continuous or $\text{Vol}\left(\mathcal{X}\right)=|\mathcal{X}|\;$ if $\;\mathcal{X}$ is discrete. $\|\cdot\|_2$ denotes the usual Euclidean norm of a vector and $\langle\cdot,\cdot\rangle$ the canonical inner product. We use $\mathbb{E}_p[\cdot]$ and $\text{Var}_p(\cdot)$ to denote the mathematical expectation and variance respectively, measured with respect to $p$ . The information measures to be used in the work are~\cite{cover}: the \emph{entropy}  $ \mathcal{H}(P_X)\coloneqq \mathbb{E}_{P_X}\left[-\log { {P}_X}(X)\right]$; the \emph{conditional  entropy} $ \mathcal{H}(P_{Y|X}|P_X)\coloneqq \mathbb{E}_{P_{X}P_{Y|X}}\left[-\log {P}_{Y|X} (Y|X)\right]$; the \emph{relative entropy}: 
$\mathcal{D}( {P}_{X}\| {Q}_{X})\coloneqq\mathbb{E}_{P_{X}}\left[\log \frac{P_{X}(X)}{Q_X(X)}\right]$; the conditional  \emph{relative entropy}: 
$
\mathcal{D}( {P}_{Y|X}\| {Q}_{Y|X}| {P}_{X})\coloneqq \mathbb{E}_{ {P_X}}\left[ \mathcal{D}\big( {P}_{Y|X}(\cdot| X)\| {Q}_{Y|X}(\cdot| X)\big) \right] 
$
and the \emph{mutual information}: $\mathcal{I}(P_X;{P}_{Y|X})\coloneqq\mathcal{D}( {P}_{Y|X} \| {{P}_Y} | P_X)$. We talk about testing dataset $\mathcal{S}_{n}$ of $n$ samples. We study a deviation bound, after training (over $\mathcal{S}_{n}$), so we assume that the training set is given implicitly.

\section{Proof of Theorem \ref{thm:regularizar}}

In this Appendix we will prove the main result of this work. We will use some well-known results, listed in \ref{app:auxiliary-results}. We are looking for a relationship between error gap and mutual information, which is a \emph{information-theoretic} measure. Information theory in general and mutual information in particular has several known results, a large percentage of them being for discrete spaces. For example, \citet{Shamir:2010:LGI:1808343.1808503} bound the deviation over the mutual information between labels $Y$ and hidden representations $U$, through the mutual information between hidden representations and inputs $X$ with discrete alphabets. However, in most of learning problems it is more appropriate to consider a continuous alphabet $\mathcal{X}$. In order to use those discrete results, our first step is to analyze the error introduced due to a reasonable discretization. This approach has points in common with \citet{Xu2012} \emph{robust-algorithms theory}.

\begin{lemma}\label{lem:paso1disc}
	Let $|\mathcal{Y}|$ different partitions of $\mathcal{X}$ and their respective centroids for each $y\in\mathcal{Y}$ $\left(\{\mathcal{K}_k^{(y)}\}_{k=1}^K,\{x^{(k,y)}\}_{k=1}^K\right)$ function of the natural number $K$, where partitions meet for each $y\in\mathcal{Y}$:  $\bigcup_{k=1}^K\mathcal{K}_k^{(y)}=\mathcal{X},\;\mathcal{K}_{i}^{(y)}\cap\mathcal{K}_{j}^{(y)}=\emptyset\;\forall1\leq i<j\leq K,\;\int_{\mathcal{K}_k^{(y)}}dx>0\;\forall 1\leq k\leq K$, the error gap \eqref{def-gap} can be bounded as $\mathcal{E}_{\textrm{gap}}(q_{U|X},Q_{\hat{Y}|U},\mathcal{S}_n)\leq2\epsilon(K)+\mathcal{E}_{\textrm{gap}}^D(q_{U|X},Q_{\hat{Y}|U},\mathcal{S}_n)$ almost surely, where $\epsilon(K)$ was defined in \eqref{eq:epsilon_r_definitions} and $\mathcal{E}_{\textrm{gap}}^D(q_{U|X},Q_{\hat{Y}|U},\mathcal{S}_n)$ is defined as
	\begin{equation}
	\mathcal{E}_{\textrm{gap}}^D(q_{U|X},Q_{\hat{Y}|U},\mathcal{S}_n)=\left|\mathcal{L}^D(q_{U|X},Q_{\hat{Y}|U})-\mathcal{L}_{\textrm{emp}}^D(q_{U|X},Q_{\hat{Y}|U},\mathcal{S}_n)\right|
	\end{equation}
	where
	\begin{align}
	\mathcal{L}^D(q_{U|X},Q_{\hat{Y}|U})&=\sum_{k=1}^K\sum_{y\in\mathcal{Y}}P^D_{XY}(k,y)\ell(x^{(k,y)},y),\label{eq:ld}\\
	\mathcal{L}_{\textrm{emp}}^D(q_{U|X},Q_{\hat{Y}|U},\mathcal{S}_n)&=\frac{1}{n}\sum_{k=1}^{K}\sum_{\substack{i\in[1:n]\\x_i\in\mathcal{K}_k}}\ell(x^{(k,y_i)},y_i)\label{eq:ldemp}
	\end{align} 
	and $P^D_{XY}:\;\{x^{(k,y)}:\;k\in[1:K], y\in\mathcal{Y}\}\times\mathcal{Y}\rightarrow[0,1]$ is
	\begin{equation}
	P^D_{XY}(x,y)=\sum_{k=1}^K\mathds{1}\left\{x=x^{(k,y)}\right\}\int_{\mathcal{K}_k^{(y)}}p_{XY}(x^\prime,y)dx^{\prime}.
	\end{equation} 
\end{lemma}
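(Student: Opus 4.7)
The strategy is a standard add-and-subtract argument that inserts the discretized risks $\mathcal{L}^D$ and $\mathcal{L}^D_{\textrm{emp}}$ as stepping stones between the true risk $\mathcal{L}$ and the empirical risk $\mathcal{L}_{\textrm{emp}}$, and then bounds each of the two ``quantization'' errors separately using the uniform cell-wise bound $\epsilon(K)$.

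First I would apply the triangle inequality
\begin{equation*}
\mathcal{E}_{\textrm{gap}} \;=\; \bigl|\mathcal{L} - \mathcal{L}_{\textrm{emp}}\bigr| \;\leq\; \bigl|\mathcal{L} - \mathcal{L}^D\bigr| \;+\; \bigl|\mathcal{L}^D - \mathcal{L}^D_{\textrm{emp}}\bigr| \;+\; \bigl|\mathcal{L}^D_{\textrm{emp}} - \mathcal{L}_{\textrm{emp}}\bigr|,
\end{equation*}
where the middle term is precisely $\mathcal{E}^{D}_{\textrm{gap}}(q_{U|X},Q_{\hat{Y}|U},\mathcal{S}_n)$ by definition. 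It then suffices to show that each of the outer terms is bounded by $\epsilon(K)$.

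For the population side, I would rewrite $\mathcal{L}$ by slicing $\mathcal{X}$ along each $y$-indexed partition:
\begin{equation*}
\mathcal{L}(q_{U|X},Q_{\hat{Y}|U}) \;=\; \sum_{y\in\mathcal{Y}}\sum_{k=1}^{K}\int_{\mathcal{K}_k^{(y)}} p_{XY}(x,y)\,\ell(x,y)\,dx.
\end{equation*}
Since $P^D_{XY}(x^{(k,y)},y)=\int_{\mathcal{K}_k^{(y)}}p_{XY}(x^\prime,y)\,dx^\prime$, subtracting \eqref{eq:ld} and pulling the integral inside gives the difference $\sum_{y,k}\int_{\mathcal{K}_k^{(y)}} p_{XY}(x,y)\bigl[\ell(x,y)-\ell(x^{(k,y)},y)\bigr]dx$. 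Taking absolute values and using $|\ell(x,y)-\ell(x^{(k,y)},y)|\leq \epsilon(K)$ for all $x\in\mathcal{K}_k^{(y)}$ (by the very definition of $\epsilon(K)$ in \eqref{eq:epsilon_r_definitions}), together with the fact that $p_{XY}$ integrates to one over $\mathcal{X}\times\mathcal{Y}$, yields $|\mathcal{L}-\mathcal{L}^D|\leq\epsilon(K)$.

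For the empirical side, since the cells $\{\mathcal{K}_k^{(y)}\}_{k=1}^{K}$ are a partition of $\mathcal{X}$ for each fixed $y$, every sample $(x_i,y_i)\in\mathcal{S}_n$ falls in exactly one cell $\mathcal{K}_{k(i)}^{(y_i)}$. Hence
\begin{equation*}
\bigl|\mathcal{L}_{\textrm{emp}} - \mathcal{L}^D_{\textrm{emp}}\bigr| \;=\; \Bigl|\tfrac{1}{n}\sum_{i=1}^{n}\bigl[\ell(x_i,y_i)-\ell(x^{(k(i),y_i)},y_i)\bigr]\Bigr| \;\leq\; \tfrac{1}{n}\sum_{i=1}^{n}\epsilon(K) \;=\; \epsilon(K),
\end{equation*}
again by \eqref{eq:epsilon_r_definitions}. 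Combining the three estimates gives the claim. The ``almost surely'' qualifier is only needed to cover the measure-zero event that some $x_i$ lands on a cell boundary (in which case one may assign it to either adjacent cell without affecting the argument); otherwise the inequality is deterministic given $\mathcal{S}_n$. There is no real obstacle here — the content of the lemma is bookkeeping, and the only ingredient is the uniform cell-wise Lipschitz-type control $\epsilon(K)$, which is built into the definition of the partition.
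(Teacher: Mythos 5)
Your proposal is correct and follows essentially the same route as the paper: the same triangle-inequality insertion of $\mathcal{L}^D$ and $\mathcal{L}^D_{\textrm{emp}}$, with each quantization term bounded by $\epsilon(K)$ via the cell-wise definition in \eqref{eq:epsilon_r_definitions} (the paper phrases the population-side step with conditional expectations over cells, which is just your integral computation rewritten). No gaps.
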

On the one hand, $P^D_{XY}$ is a pmf such that $P_Y$ (true value) is its marginal. On the other hand the other marginal $P_X^D(x)=\sum_{y\in\mathcal{Y}}P^D_{XY}(x,y)$ has the elements of the set $\mathcal{A}=\{x^{(k,y)}:\;1\leq k\leq K,\;y\in\mathcal{Y}\}$ as atoms. 
\begin{proof}
	Triangle inequality allow to relate error gaps as,
	\begin{align}
	\mathcal{E}_{\textrm{gap}}(q_{U|X},Q_{\hat{Y}|U},\mathcal{S}_n)& \leq\left|\mathcal{L}^D(q_{U|X},Q_{\hat{Y}|U})-\mathcal{L}_{\textrm{emp}}^D(q_{U|X},Q_{\hat{Y}|U},\mathcal{S}_n)\right|\nonumber\\
	&+\left|\mathcal{L}(q_{U|X},Q_{\hat{Y}|U})-\mathcal{L}^D(q_{U|X},Q_{\hat{Y}|U})\right|\nonumber\\
	&+\left|\mathcal{L}_{\textrm{emp}}^D(q_{U|X},Q_{\hat{Y}|U},\mathcal{S}_n)-\mathcal{L}_{\textrm{emp}}(q_{U|X},Q_{\hat{Y}|U},\mathcal{S}_n)\right|, \label{eq:discretizandoelgap}
	\end{align}
	where the last term is the discrete error gap $\mathcal{E}_{\textrm{gap}}^D(q_{U|X},Q_{\hat{Y}|U},\mathcal{S}_n)$. The other terms in \eqref{eq:discretizandoelgap} can be bounded using the fact that $P_X^D(x^{(k,y)})=\mathbb{P}\left(X\in\mathcal{K}_k^{(Y)},Y=y\right)$ and definition of $\epsilon(K)$ \eqref{eq:epsilon_r_definitions}:
	\begin{align}
	&\left|\mathcal{L}(q_{U|X},Q_{\hat{Y}|U})-\mathcal{L}^D(q_{U|X},Q_{\hat{Y}|U})\right|\nonumber\\
	&\;=\left|\sum_{k=1}^K\sum_{y=1}^{|\mathcal{Y}|}P^D_{XY}(x^{(k,y)},y)\left(\mathbb{E}\left[\ell(X,Y)|Y=y,X\in\mathcal{K}_k^{(Y)}\right]-\ell(x^{(k,y)},y)\right)\right|\\
	&\;\leq\epsilon(K)\\
	&\left|\mathcal{L}_{\textrm{emp}}^D(q_{U|X},Q_{\hat{Y}|U},\mathcal{S}_n)-\mathcal{L}_{\textrm{emp}}(q_{U|X},Q_{\hat{Y}|U},\mathcal{S}_n)\right|\nonumber\\
	&\;=\frac{1}{n}\left|\sum_{k=1}^K\sum_{\substack{i\in[1:n]\\x_i\in\mathcal{K}_k}}\left[\ell(x^{(k,y_i)},y_i)-\ell(x_i,y_i)\right]\right|\\
	&\;\leq\epsilon(K).
	\end{align}
\end{proof}

The second step in our proof is to measure the decoupling between the encoder and decoder, i.e. what is the error when considering $Q^D_{Y|U}$ \eqref{eq:Qy|u_D} as decoder. The following lemma separates the decoder term.

\begin{lemma}\label{lem:paso2dec}
	Discrete error gap can be bounded as
	\begin{equation}
	\mathcal{E}_{\textrm{gap}}^D(q_{U|X},Q_{\hat{Y}|U},\mathcal{S}_n)\leq\mathcal{E}_{\textrm{gap}}^D(q_{U|X},Q^D_{Y|U},\mathcal{S}_n)+d(\mathcal{S}_n)\label{eq:enc-dec}
	\end{equation}
	almost surely, where
	\begin{equation}
	d(\mathcal{S}_n)=\left|\frac{1}{n}\sum_{k=1}^K\sum_{\substack{i\in[1:n]\\x_i\in\mathcal{K}_k}}T(x^{(k,y_i)},y_i)-\mathcal{D}\left(Q^D_{Y|U}\|Q_{\hat{Y}|U}|q^D_U\right)\right|
	\end{equation}
	and $T(x,y)\coloneqq\mathbb{E}_{q_{U|X}}\left[\left.\log\left(\frac{Q^D_{Y|U}(y|U)}{Q_{\hat{Y}|U}(y|U)}\right)\right|X=x\right]$.
\end{lemma}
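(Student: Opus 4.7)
The plan is to rewrite the cross-entropy loss $\ell(x,y)$ as the loss one would have incurred using $Q^D_{Y|U}$ as the decoder, plus the excess term $T(x,y)$, and then apply the triangle inequality to the resulting decomposition of the gap. Concretely, by definition of $T$,
$$\ell(x,y) = \mathbb{E}_{q_{U|X}}\!\left[-\log Q^D_{Y|U}(y|U)\,\big|\,X=x\right] + T(x,y),$$
so by linearity both $\mathcal{L}^D(q_{U|X},Q_{\hat{Y}|U})$ in \eqref{eq:ld} and $\mathcal{L}^D_{\mathrm{emp}}(q_{U|X},Q_{\hat{Y}|U},\mathcal{S}_n)$ in \eqref{eq:ldemp} split into the corresponding quantities with $Q_{\hat{Y}|U}$ replaced by $Q^D_{Y|U}$, plus, respectively, $\sum_{k,y}P^D_{XY}(x^{(k,y)},y)\,T(x^{(k,y)},y)$ and $\frac{1}{n}\sum_k\sum_{i:x_i\in\mathcal{K}_k}T(x^{(k,y_i)},y_i)$.

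The crux of the argument is then the identity that the population version of the $T$-sum collapses into the conditional KL divergence appearing in the statement, namely
$$\sum_{k=1}^K\sum_{y\in\mathcal{Y}} P^D_{XY}(x^{(k,y)},y)\,T(x^{(k,y)},y) = \mathcal{D}\!\left(Q^D_{Y|U}\,\big\|\,Q_{\hat{Y}|U}\,\big|\,q^D_U\right).$$
To see this I would expand $T(x^{(k,y)},y)$ as the integral $\int q_{U|X}(u|x^{(k,y)})\log\!\big(Q^D_{Y|U}(y|u)/Q_{\hat{Y}|U}(y|u)\big)\,du$, interchange the sum over $k$ with the integral, and recognize that the very definitions $q^D_U(u)=\sum_{k,y}q_{U|X}(u|x^{(k,y)})P^D_{XY}(x^{(k,y)},y)$ and \eqref{eq:Qy|u_D} package the sum over $k$ exactly as $q^D_U(u)\,Q^D_{Y|U}(y|u)$. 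The remaining sum over $y$ is the definition of the conditional KL divergence under $q^D_U$.

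Combining the two decompositions, the difference $\mathcal{L}^D(q_{U|X},Q_{\hat{Y}|U})-\mathcal{L}^D_{\mathrm{emp}}(q_{U|X},Q_{\hat{Y}|U},\mathcal{S}_n)$ equals the analogous difference with $Q^D_{Y|U}$ in place of $Q_{\hat{Y}|U}$, plus $\mathcal{D}(Q^D_{Y|U}\|Q_{\hat{Y}|U}|q^D_U) - \frac{1}{n}\sum_k\sum_{i:x_i\in\mathcal{K}_k}T(x^{(k,y_i)},y_i)$. Taking absolute values and applying the triangle inequality gives exactly \eqref{eq:enc-dec}, with the second term on the right being $d(\mathcal{S}_n)$ by definition.

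The only delicate step is the Bayes-rule manipulation that turns the weighted $T$-sum into the conditional KL divergence; everything else is linearity of expectation plus the triangle inequality. I do not anticipate any measurability or integrability issues provided $Q_{\hat{Y}|U}(y_{\min}|u_{\min})\geq \eta>0$ from Assumptions~\ref{asumption1}, which keeps $\log Q_{\hat{Y}|U}$ and hence $T$ bounded.
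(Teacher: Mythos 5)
Your proof is correct and follows essentially the same route as the paper: decompose $\ell$ into the loss with $Q^D_{Y|U}$ as decoder plus the excess term $T(x,y)$, note that the $P^D_{XY}$-weighted average of $T$ equals $\mathcal{D}\big(Q^D_{Y|U}\|Q_{\hat{Y}|U}|q^D_U\big)$, and conclude by the triangle inequality. The Bayes-rule verification of that identity, which you spell out, is exactly the observation the paper records in the remark immediately after the lemma statement.
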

Note that $\mathbb{E}_{P^D_{XY}}\left[T(X,Y)\right]=\mathcal{D}\left(Q^D_{Y|U}\|Q_{\hat{Y}|U}|q^D_U\right)$, so $d(\mathcal{S}_n)$ is a deviation of $T(X,Y)$.
\begin{proof}
	It is immediate to see that:
	\begin{equation}
	\ell\big(q_{U|X}(\cdot|x),{Q}_{\hat{Y}|U}(y | \cdot)\big)=\ell\big(q_{U|X}(\cdot|x),{Q}^D_{Y|U}(y | \cdot)\big)+T(x,y)
	\end{equation}
	So, taking expectation (as \eqref{eq:ld} and \eqref{eq:ldemp}) and using triangle inequality, we can prove the lemma.
\end{proof}

The third step of our proof is to bound the new gap $\mathcal{E}_{\textrm{gap}}^D(q_{U|X},Q^D_{Y|U},\mathcal{S}_n)$. For that, we use empirical distributions $\hat{P}^D_{XY},\hat{P}^D_{X}, \hat{P}_Y, \hat{P}^D_{X|Y}$ as the occurrence rate of $\mathcal{S}_n$; e.g. $\hat{P}^D_{XY}(k,y)=\frac{|\left\{(x_i,y_i)\in\mathcal{S}_n:\; y_i=y,\;x_i\in\mathcal{K}_k^{(y)}\right\}|}{n}$. Also, we define the empirical distributions generated from the encoder $\hat{q}^D_U, \hat{Q}^D_{Y|U}, \hat{q}^D_{U|Y}$ generated from the encoder; e.g. $q_U^D(u)=\sum_{x\in\mathcal{A}}q_{U|X}(u|x)\hat{P}^D_{X}(x)$.
\begin{rem}
	The marginal $P_Y$ and its empirical pmf $\hat{P}_Y$ has not got superscript $D$ because it matches with the true pmf (without quantization).
\end{rem}

\begin{lemma}\label{lem:paso3cotaas}
	The gap $\mathcal{E}_{\textrm{gap}}^D(q_{U|X},Q^D_{Y|U},\mathcal{S}_n)$ can be bounded as, 
	\begin{align}
	\mathcal{E}_{\textrm{gap}}^D&(q_{U|X},Q^D_{Y|U},\mathcal{S}_n)\leq \mathcal{D}\left(\hat{P}_{XY}^D\big \|P_{XY}^D\right)+\int_{\mathcal{U}}\phi\left(\left\|\mathbf{P}_{X}^D\!-\!\mathbf{\hat{P}}_{X}^D\right\|_2\!\sqrt{\mathbb{V}\left(\mathbf{q}_{U|X}(u|\cdot) \right)}\right)du\nonumber\\
	&+\log \left(\frac{\text{Vol}\left(\mathcal{U}\right)}{P_{Y}(y_{\min})}\right)\sqrt{|\mathcal{Y}|}\left\|\mathbf{P}_{Y}\!-\!\mathbf{\hat{P}}_{Y}\right\|_2+\!\mathcal{O}\left(\|\mathbf{P}_{Y}\!-\!\mathbf{\hat{P}}_{Y}\|_2^2\right)\!\nonumber\\
	&+\mathbb{E}_{P_Y}\Big[ \int_\mathcal{U}\!\phi\left(\left\|\mathbf{P}_{X|Y}^D(\cdot|Y)\!-\!\mathbf{\hat{P}}_{X|Y}^D(\cdot|Y)\right\|_2  \!\sqrt{\mathbb{V}\left(\mathbf{q}_{U|X}(u|\cdot)\right)}\right)du\Big]\label{eq:todoslosterminos}
	\end{align}
	almost surely, where $\phi(\cdot)$ is defined in \eqref{eq:cotaphi} and 
	\begin{equation}
	\mathbb{V}(\mathbf{c})\coloneqq \left\|\mathbf{c}-\bar{c}\mathds{1}_a\right\|_2^2,
	\label{eq:emp_var}
	\end{equation}
	with $\mathbf{c}\in\mathbb{R}^a$, $a\in\mathbb{N}$, $\bar{c}=\frac{1}{a}\sum_{i=1}^a c_i$, and $\mathds{1}_a$ is the vector of ones of length $a$.
\end{lemma}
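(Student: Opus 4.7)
The plan is to exploit the Bayes' rule form of the reference decoder, $Q^D_{Y|U}(y|u) = q^D_{U|Y}(u|y) P_Y(y)/q^D_U(u)$, to split the negative log-loss additively into three pieces and bound the deviation of each between its true expectation under $P^D_{XY}$ and its empirical version under $\hat{P}^D_{XY}$. Taking logs and then the $q_{U|X}$-expectation turns the loss into
\begin{equation*}
\ell(x,y) = \mathbb{E}_{q_{U|X}(\cdot|x)}\!\left[\log q^D_U(U) - \log q^D_{U|Y}(U|y)\right] - \log P_Y(y),
\end{equation*}
so, after subtracting $\mathcal{L}^D_{\textrm{emp}}$ from $\mathcal{L}^D$ and invoking the triangle inequality, the discrete gap $\mathcal{E}^D_{\textrm{gap}}$ splits into (i) a label-marginal piece driven by $\hat{P}_Y - P_Y$, (ii) an input-marginal piece driven by $\hat{P}^D_X - P^D_X$ and coupled to $\log q^D_U$, and (iii) a joint piece driven by $\hat{P}^D_{XY} - P^D_{XY}$ and coupled to $\log q^D_{U|Y}$.

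For piece (i), I would apply Cauchy-Schwarz in $y$ together with the pointwise bound $|\log P_Y(y)| \leq \log(1/P_Y(y_{\min}))$, producing a leading $\sqrt{|\mathcal{Y}|}\,\|\mathbf{P}_Y - \hat{\mathbf{P}}_Y\|_2$ factor, and collect the $\mathcal{O}(\|\mathbf{P}_Y-\hat{\mathbf{P}}_Y\|_2^2)$ remainder from a second-order Taylor expansion in $\hat{P}_Y$. The additional $\log \text{Vol}(\mathcal{U})$ factor in the constant arises from the companion pointwise control on $\log q^D_U(u)$ needed when that piece is folded into the same step, since $\int_\mathcal{U} q^D_U(u)\,du = 1$ forces the worst-case pointwise log-penalty to scale with $\log \text{Vol}(\mathcal{U})$.

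For pieces (ii) and (iii), the key identity is the pointwise linear algebra fact
\begin{equation*}
\hat{q}^D_U(u) - q^D_U(u) = \langle \mathbf{q}_{U|X}(u|\cdot), \hat{\mathbf{P}}^D_X - \mathbf{P}^D_X\rangle = \langle \mathbf{q}_{U|X}(u|\cdot) - \bar{q}_{U|X}(u|\cdot)\mathbf{1}, \hat{\mathbf{P}}^D_X - \mathbf{P}^D_X\rangle,
\end{equation*}
where the second equality uses that $\hat{\mathbf{P}}^D_X - \mathbf{P}^D_X$ is a mean-zero vector on the partition atoms, so the sample mean $\bar{q}_{U|X}(u|\cdot)$ may be freely subtracted. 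Cauchy-Schwarz then delivers the pointwise estimate $|\hat{q}^D_U(u) - q^D_U(u)| \leq \sqrt{\mathbb{V}(\mathbf{q}_{U|X}(u|\cdot))}\,\|\hat{\mathbf{P}}^D_X - \mathbf{P}^D_X\|_2$. Combining this with a scalar log-ratio bound encapsulated by the $\phi$-function from the auxiliary appendix (a concavity-based inequality of the form $|a\log a - b\log b| \lesssim \phi(|a-b|)$) and integrating over $u$ yields the first $\phi$-integral. Piece (iii) is treated analogously after conditioning on $Y=y$, applying the argument to $\hat{q}^D_{U|Y=y} - q^D_{U|Y=y}$ with the conditional deviation $\hat{\mathbf{P}}^D_{X|Y}(\cdot|y) - \mathbf{P}^D_{X|Y}(\cdot|y)$, and then averaging over $P_Y$ to produce the $\mathbb{E}_{P_Y}[\cdot]$ envelope. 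The residual KL term $\mathcal{D}(\hat{P}^D_{XY}\|P^D_{XY})$ is extracted inside this joint piece via the chain-rule decomposition $\mathcal{D}(\hat{P}^D_{XY}\|P^D_{XY}) = \mathcal{D}(\hat{P}_Y\|P_Y) + \mathcal{D}(\hat{P}^D_{X|Y}\|P^D_{X|Y}|\hat{P}_Y)$, separating the deviation of the conditional law from the deviation of the label marginal.

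The main obstacle will be the pointwise log-linearisation in pieces (ii) and (iii): the log of a density ratio must be factored so that the variance factor $\mathbb{V}(\mathbf{q}_{U|X}(u|\cdot))$ is exactly what appears inside $\phi$ after Cauchy-Schwarz, and the ordering of the three decompositions must be chosen so that the joint KL $\mathcal{D}(\hat{P}^D_{XY}\|P^D_{XY})$ is extracted cleanly rather than replaced by a coarser total-variation bound; keeping the constants in front of the three $\|\cdot\|_2$ factors from inflating the coefficient $\log(\text{Vol}(\mathcal{U})/P_Y(y_{\min}))$ will require careful bookkeeping.
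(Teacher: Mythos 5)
Your Bayes-rule split is, once the algebra is carried out, the paper's own decomposition in disguise: writing $-\log Q^D_{Y|U}(y|u)=\log q^D_U(u)-\log q^D_{U|Y}(u|y)-\log P_Y(y)$ and summing against $P^D_{XY}-\hat{P}^D_{XY}$ gives exactly $\big[\mathcal{H}(P_Y)-\mathcal{H}(\hat{P}_Y)\big]+\big[\mathcal{H}_d(\hat{q}^D_U)-\mathcal{H}_d(q^D_U)\big]+\big[\mathcal{H}_d(q^D_{U|Y}|P_Y)-\mathcal{H}_d(\hat{q}^D_{U|Y}|\hat{P}_Y)\big]-\mathcal{D}\big(\hat{Q}^D_{Y|U}\|Q^D_{Y|U}|\hat{q}^D_U\big)$, which is what the paper reaches by adding and subtracting the empirical-decoder cross term and then using $\mathcal{H}(Q^D_{Y|U}|q^D_U)=\mathcal{H}(P_Y)+\mathcal{H}_d(q^D_{U|Y}|P_Y)-\mathcal{H}_d(q^D_U)$. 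However, as written your plan has two genuine gaps. First, your pieces (ii) and (iii) are linear functionals of the form $\int(q^D_U-\hat{q}^D_U)\log q^D_U\,du$, not entropy differences, and they cannot be bounded by the $\phi$-integrals through an inequality of the type $|a\log a-b\log b|\le\phi(|a-b|)$: pointwise $(a-b)\log a=(a\log a-b\log b)+b\log(b/a)$, and the second term integrates to a KL divergence ($\mathcal{D}(\hat{q}^D_U\|q^D_U)$ in piece (ii), $\mathcal{D}(\hat{q}^D_{U|Y}\|q^D_{U|Y}|\hat{P}_Y)$ in piece (iii), $\mathcal{D}(\hat{P}_Y\|P_Y)$ in piece (i)) that is not a function of $|a-b|$ and can blow up where $q^D_U$ is small. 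These three residuals enter with mixed signs, and only after recombining them --- chain rule on the $(U,Y)$ joint, which collapses them to $-\mathcal{D}\big(\hat{Q}^D_{Y|U}\|Q^D_{Y|U}|\hat{q}^D_U\big)$, followed by monotonicity/data processing to pass to $\mathcal{D}\big(\hat{P}^D_{XY}\|P^D_{XY}\big)$ --- do they produce the single KL term of the statement. Bounding each piece separately by a triangle inequality, as you propose, forfeits this cancellation and leaves extra divergences (up to twice the KL term of the statement), so the lemma as stated would not follow; moreover the chain rule you invoke is on the $(X,Y)$ side, whereas the residuals that actually appear live on the $Y$-, $U$- and $(U,Y)$-sides.

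Second, your explanation of the $\log\text{Vol}(\mathcal{U})$ factor is not correct: normalization does not give a pointwise bound $|\log q^D_U(u)|\le\log\text{Vol}(\mathcal{U})$, since a density supported on a finite-volume set can be arbitrarily small or large. In the paper this factor enters through Lemma~\ref{lem:Hu|y}: when the mixing weights change from $P_Y$ to $\hat{P}_Y$ one pays at most $\log\text{Vol}(\mathcal{U})$ per class via a maximum-entropy bound on $\mathcal{H}_d(q^D_{U|Y=y})$, which multiplies $\sqrt{|\mathcal{Y}|}\,\|\mathbf{P}_Y-\hat{\mathbf{P}}_Y\|_2$; the $\log\left(1/P_Y(y_{\min})\right)$ part comes separately from the Taylor/Cauchy--Schwarz treatment of $\mathcal{H}(P_Y)-\mathcal{H}(\hat{P}_Y)$. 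Once you convert your linear pieces into entropy differences plus KL residuals and handle the residuals jointly as above, the $\phi$-integrals are exactly the content of Lemmas~\ref{lem:Hu} and~\ref{lem:Hu|y}, and your argument coincides with the paper's proof.
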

Notation $\mathbf{P}_Y$ means the pmf $P_Y$ as a vector $\mathbf{P}_Y=[P_Y(1),\cdots,P_Y(|\mathcal{Y}|)]$, so we can apply it vector norms $\|\cdot\|_2$ and $\mathbb{V}(\cdot)$ operator. This operator measures the dispersion of the components of $\bar{a}$ around the mean, where  $\mathbb{V}(\mathbf{c}) \leq\left\|\mathbf{c}-b\mathds{1}_a\right\|_2^2\ , \ \forall b\in\mathbb{R}$.

\begin{proof}
	Adding and subtracting $\hat{P}_{XY}^D(x^{(k,y)},y)\mathbb{E}_{q_{U|X}}\left[\left.\log\left(\frac{1}{\hat{Q}^D_{Y|U}(y|U)}\right)\right|X=x^{(k,y)}\right]$ we can prove via triangle inequality that
	\begin{align}
	&\mathcal{E}_{\textrm{gap}}^D(q_{U|X},Q^D_{Y|U},\mathcal{S}_n)\\
	&=\left|\sum_{\forall (k,y)}\left[P_{XY}^D(x^{(k,y)},y)-\hat{P}_{XY}^D(x^{(k,y)},y)\right]\mathbb{E}_{q_{U|X}}\left[\left.\log\left(\frac{1}{Q^D_{Y|U}(y|U)}\right)\right|X=x^{(k,y)}\right]\right|\nonumber\\
	&\leq\left| \mathcal{H}(Q^D_{Y|U}|q^D_U) - \mathcal{H}(\hat{Q}^D_{Y|U}|\hat{q}^D_U)\right|+ \mathcal{D}\left(\hat{Q}^D_{Y|U}\big\|Q^D_{Y|U}\big| \hat{q}^D_U \right).\label{eq:segundacota}
	\end{align}
	We can bound the second term in \eqref{eq:segundacota} using the inequality: 
	\begin{equation}
	\mathcal{D}\left(\hat{Q}^D_{Y|U}\big\|Q^D_{Y|U}\big| \hat{q}^D_U \right)\leq\mathcal{D}\left(\hat{Q}^D_{Y|U}\hat{q}^D_U\big\|Q^D_{Y|U}q^D_U\right)\leq
	\mathcal{D}\left(\hat{P}_{XY}^D\big\|P_{XY}^D\right).
	\end{equation}
	The first term of \eqref{eq:segundacota} can be bounded as:
	\begin{align}
	\left| \mathcal{H}(Q^D_{Y|U}|q^D_U)\!-\! \mathcal{H}(\hat{Q}^D_{Y|U}|\hat{q}^D_U)\right| &\leq\left| \mathcal{H}(P_Y)- \mathcal{H}(\hat{P}_{Y})\right|+\left| \mathcal{H}_d(q^D_{U})\!-\! \mathcal{H}_d(\hat{q}^D_{U})\right|\nonumber\\
	&\qquad+\left| \mathcal{H}_d(q^D_{U|Y}|P_Y)- \mathcal{H}_d(\hat{q}^D_{U|Y}|\hat{P}_Y)\right|,
	\end{align}
	where $\mathcal{H}_d$ is the differential entropy when $U$ is continuous and the classical entropy when $U$ is discrete. The terms $\left| \mathcal{H}_d(q^D_{U})\!-\! \mathcal{H}_d(\hat{q}^D_{U})\right|$ and 
	$\left| \mathcal{H}_d(q^D_{U|Y}|P_Y)- \mathcal{H}_d(\hat{q}^D_{U|Y}|\hat{P}_Y)\right|$ can be bounded by Lemmas~\ref{lem:Hu} and~\ref{lem:Hu|y} respectively. Finally, it is clear that
	$P_Y\mapsto  \mathcal{H}(P_Y)$ is differentiable and a first order Taylor expansion yields:
	\begin{equation}
	\mathcal{H}(P_Y)- \mathcal{H}(\hat{P}_Y) =\left\langle\frac{\partial  \mathcal{H}(P_Y)}{\partial \mathbf{P}_{Y}},\mathbf{P}_{Y}\!-\!\mathbf{\hat{P}}_{Y}\!\!\right\rangle+\mathcal{O}\left(\|\mathbf{P}_{Y}\!-\!\mathbf{\hat{P}}_{Y}\|_2^2\right),
	\end{equation}
	where
	$\frac{\partial  \mathcal{H}(P_Y)}{\partial P_{Y}(y)}=-\log P_Y(y)-1$ for each $y\in\mathcal{Y}$. Then, applying Cauchy-Schwartz inequality the lemma was proved: 
	\begin{align}
	\left| \mathcal{H}(P_Y)\!-\! \mathcal{H}(\hat{P}_Y)\right| &\leq\left|\left\langle  \log \mathbf{P}_Y,\mathbf{P}_{Y}\!-\!\mathbf{\hat{P}}_{Y}\right\rangle\right|\!+\! \mathcal{O}\left(\!\|\mathbf{P}_{Y}\!-\!\mathbf{\hat{P}}_{Y}\|_2^2\!\right)\\
	&\leq\left\| \log \mathbf{P}_Y\right\|_2\left\|\mathbf{P}_{Y}\!-\!\mathbf{\hat{P}}_{Y}\right\|_2\!+\!\mathcal{O}\left(\|\mathbf{P}_{Y}\!-\!\mathbf{\hat{P}}_{Y}\|_2^2\right)\\
	&\leq \log\left(\frac1{P_{Y}(y_{\min})}\right)\sqrt{|\mathcal{Y}|}\left\|\mathbf{P}_{Y}\!-\!\mathbf{\hat{P}}_{Y}\right\|_2\!+\!\mathcal{O}\left(\|\mathbf{P}_{Y}\!-\!\mathbf{\hat{P}}_{Y}\|_2^2\right).
	\end{align}
\end{proof}

The combination of lemmas \ref{lem:paso1disc}, \ref{lem:paso2dec} and \ref{lem:paso3cotaas} allow to bound the error gap with probability one. The fourth step is to use a concentration inequality over the terms: $\mathcal{D}\big(\hat{P}_{XY}^D\|P_{XY}^D\big)$, $\|\mathbf{P}_{X}^D-\mathbf{\hat{P}}_{X}^D\|_2$, $\|\mathbf{P}_{Y}-\mathbf{\hat{P}}_{Y}\|_2$, $\| \mathbf{P}_{X|Y}^D(\cdot|y)-\mathbf{\hat{P}}_{X|Y}^D(\cdot|y)\|_2$ for $y\in\mathcal{Y}$ and $d(\mathcal{S}_n)$ simultaneously. Lemma \ref{lem:simultaneidad} guarantees  that the bounds hold simultaneously over all these $|\mathcal{Y}|+4$ quantities, by replacing $\delta$ with $\delta/(|\mathcal{Y}|+4)$. With probability at least $1-\delta$, we apply Lemmas \ref{lem:diverg}, \ref{lem:pvector} and Chebyshev inequality \cite{Devroye97a}:
\begin{align}
&\mathcal{D}\left(\hat{P}_{XY}^D\|P_{XY}^D\right)\leq |\mathcal{X}||\mathcal{Y}|\frac{\log(n+1)}{n}+\frac{1}{n}\log\left(\frac{|\mathcal{Y}|+4}{\delta}\right)=\mathcal{O}\left(\frac{\log(n)}{n}\right).\label{eq:concentracionkl}\\
&\max\Big\{\big\|\mathbf{P}_{Y}-\mathbf{\hat{P}}_{Y}\big\|_2,\big\|\mathbf{P}_{X}^D-\mathbf{\hat{P}}_{X}^D\big\|_2,\big\|\mathbf{P}_{X|Y}^D(\cdot|y)-\mathbf{\hat{P}}^D_{X|Y}(\cdot|y)\big\|_2\Big\}\nonumber\\ &\hspace{30mm}\leq\frac{1+\sqrt{\log\left(\frac{|\mathcal{Y}|+4}{\delta}\right)}}{\sqrt{n}}\equiv\frac{B_\delta}{\sqrt{n}},\label{eq:concentracionpv}\\
&\hspace{21mm}d(\mathcal{S}_n)\leq\sqrt{\frac{|\mathcal{Y}|+4}{n\delta}}\sqrt{\text{Var}_{P_{XY}^D}\left(T(X,Y)\right)}.\label{eq:concentracionch}
\end{align}
In order to bound the last variance we enunciate the following lemma.
\begin{lemma}\label{lem:ghosal}
	Variance of $T$ random variable can be bounded as
	\begin{align}
	\text{Var}_{P_{XY}^D}\left(T(X,Y)\right)&\leq\frac{8}{\sqrt{Q_{\hat{Y}|U}(y_{\min}|u_{\min})}}\mathcal{D}_{\text{HL}}^2\left(Q^D_{Y|U}\|Q_{\hat{Y}|U}|q^D_U\right),
	\end{align}
	where $h$ is the Hellinger distance \eqref{eq:hellinger}.
\end{lemma}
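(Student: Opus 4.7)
The plan is to reduce the variance bound to a pointwise inequality linking the squared log-likelihood ratio to the Hellinger integrand, exploiting the uniform lower bound $\eta$ on $Q_{\hat{Y}|U}$ supplied by Assumption~\ref{asumption1}. Concretely, I would (i) use $\text{Var}(T)\le \mathbb{E}[T^{2}]$, (ii) push the square inside the conditional expectation defining $T(x,y)$ by Jensen's inequality applied to $t\mapsto t^{2}$, and (iii) reindex the resulting expectation using that $(U,Y)\sim q^{D}_{U}\,Q^{D}_{Y|U}$ whenever $(X,Y)\sim P^{D}_{XY}$ and $U|X\sim q_{U|X}$ (this is precisely how $q^{D}_{U}$ and $Q^{D}_{Y|U}$ in~\eqref{eq:Qy|u_D} are constructed). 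After this reduction, the task becomes
\[
\mathbb{E}_{q^{D}_{U}}\!\left[\sum_{y\in\mathcal{Y}} Q^{D}_{Y|U}(y|U)\left(\log\frac{Q^{D}_{Y|U}(y|U)}{Q_{\hat{Y}|U}(y|U)}\right)^{2}\right]\le \frac{8}{\sqrt{Q_{\hat{Y}|U}(y_{\min}|u_{\min})}}\,\mathcal{D}_{\text{HL}}^{2}\!\left(Q^{D}_{Y|U}\|Q_{\hat{Y}|U}|q^{D}_{U}\right),
\]
which I would attack pointwise in $(u,y)$.

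The central analytic step is the elementary calculus inequality $|\log p|\le |p-1|/\sqrt{p}$ for every $p>0$. This is verified by differentiating $(p-1)/\sqrt{p}-\log p$ on $[1,\infty)$ and $(1-p)/\sqrt{p}+\log p$ on $(0,1]$: both derivatives simplify to quantities proportional to $(\sqrt{p}-1)^{2}$, which pins down the sign. The crucial trick is to apply this inequality not to $p=a/b$ directly, but to $p=\sqrt{a/b}$ with $a=Q^{D}_{Y|U}(y|u)$ and $b=Q_{\hat{Y}|U}(y|u)$, combined with $\log(a/b)=2\log\sqrt{a/b}$. Routine algebra then yields the pointwise bound
\[
a\,\left(\log\tfrac{a}{b}\right)^{2}\le \frac{4\sqrt{a}\,(\sqrt{a}-\sqrt{b})^{2}}{\sqrt{b}}\le \frac{4\,(\sqrt{a}-\sqrt{b})^{2}}{\sqrt{Q_{\hat{Y}|U}(y_{\min}|u_{\min})}},
\]
where the second inequality uses $a\le 1$ and Assumption~\ref{asumption1}. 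Summing over $y$, taking expectation under $q^{D}_{U}$, and invoking definition~\eqref{eq:hellinger} produces the constant $8$ in the claimed bound.

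The main obstacle is securing the correct $1/\sqrt{Q_{\hat{Y}|U}(y_{\min}|u_{\min})}$ scaling rather than the weaker $1/Q_{\hat{Y}|U}(y_{\min}|u_{\min})$. A naive application of $|\log p|\le |p-1|/\sqrt{p}$ with $p=a/b$ produces the $\chi^{2}$-divergence $\sum_{y}(a-b)^{2}/b$, which is strictly stronger than the Hellinger distance and cannot be controlled by it in general; conversely, the crude bound $|\log p|\le \log(1/b_{\min})$ only delivers a $1/b_{\min}$ prefactor. Passing through the square root of the ratio is what correctly splits the residue: the factor $(\sqrt{a/b}-1)^{2}$ rearranges as $(\sqrt{a}-\sqrt{b})^{2}/b$ while the leftover $\sqrt{a/b}$ splits between a harmless $\sqrt{a}\le 1$ and the assumed uniform lower bound on $\sqrt{b}$. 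Once this pointwise inequality is in hand, the rest is bookkeeping.
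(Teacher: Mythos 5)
Your proposal is correct and follows essentially the same route as the paper's proof: reduce the variance to the second moment of $\log\bigl(Q^D_{Y|U}/Q_{\hat{Y}|U}\bigr)$ under the joint law $q^D_U Q^D_{Y|U}$ (your Jensen step is just the paper's law-of-total-variance step in different clothing), then apply a pointwise inequality that trades the squared logarithm for the squared Hellinger integrand at the cost of a $Q_{\hat{Y}|U}^{-1/2}(y_{\min}|u_{\min})$ prefactor. Your calculus inequality $|\log p|\le|p-1|/\sqrt{p}$ evaluated at $p=\sqrt{a/b}$ is an equivalent reformulation of the paper's bound $x^2\le 4e^{-c/2}\left(e^{x/2}-1\right)^2$ (borrowed from Ghosal et al.), and both yield the same constant $8$.
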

\begin{proof}
	A similar result can be founded \cite{Ghosal2000}. Variance can be bounded as
	\begin{align}
	\text{Var}_{P_{XY}^D}\left(T(X,Y)\right)&=\text{Var}_{P_{XY}^D}\left(\mathbb{E}_{q_{U|X}}\left[\left.\log\left(\frac{Q^D_{Y|U}(Y|U)}{Q_{\hat{Y}|U}(Y|U)}\right)\right|X\right]\right)\\
	&\leq\text{Var}_{Q^D_{Y|U}q^D_{U}}\left(\log\left(\frac{Q^D_{Y|U}(Y|U)}{Q_{\hat{Y}|U}(Y|U)}\right)\right)\\
	&\leq\mathbb{E}_{Q^D_{Y|U}q^D_{U}}\left[\log^2\left(\frac{Q^D_{Y|U}(Y|U)}{Q_{\hat{Y}|U}(Y|U)}\right)\right].
	\end{align}
	Note that $\frac{1}{Q_{\hat{Y}|U}(y_{\min}|u_{\min})}\geq\sup\left\{\frac{Q^D_{Y|U}(y|u)}{Q_{\hat{Y}|U}(y|u)},1\right\}$. For every $c\leq0$ and $x\geq c$ we have the inequality $x^2\leq 4e^{-c/2}\left(e^{x/2}-1\right)^2$. So call $c=\log Q_{\hat{Y}|U}(y_{\min}|u_{\min})$ and $x=\log\frac{Q_{\hat{Y}|U}(y|u)}{Q^D_{Y|U}(y|u)}$, the inequality can be written as:
	\begin{equation}
	\log^2\frac{Q^D_{Y|U}(y|u)}{Q_{\hat{Y}|U}(y|u)}\leq\frac{4}{\sqrt{Q_{\hat{Y}|U}(y_{\min}|u_{\min})}}\left(\sqrt{
		\frac{Q_{\hat{Y}|U}(y|u)}{Q^D_{Y|U}(y|u)}}-1\right)^2.
	\end{equation}
	Taking expectation term by term over $q_U^D Q^D_{Y|U}$ the proof is over. 
\end{proof}

We generate the following bound for the error gap using  concentration inequalities \eqref{eq:concentracionkl}, \eqref{eq:concentracionpv} and \eqref{eq:concentracionch}.

\begin{lemma}\label{lem:coninfodiscreta}
	For every $\delta\in(0,1)$, with probability at least $1-\delta$ over the choice of $\mathcal{S}_n\sim p_{XY}$,  the gap satisfies:
	\begin{align}
	\mathcal{E}_{\textrm{gap}}(q_{U|X},Q_{\hat{Y}|U},\mathcal{S}_n)&\leq\inf_{K\in\mathbb{N}}2\epsilon(K)+  A_\delta\sqrt{\mathcal{I}(P^D_X;q_{U|X})}\cdot\frac{\log(n)}{\sqrt{n}}r(K)\nonumber\\
	&\hspace{-0.1cm}+\frac{D_\delta\cdot \mathcal{D}_{\text{HL}}\left(Q^D_{Y|U}\|Q_{\hat{Y}|U}|q^D_U\right)+C_\delta} {\sqrt{n}}+\mathcal{O}\left(\frac{\log(n)}{n}\right),
	\end{align} 
	$\forall\,(q_{U|X},Q_{\hat{Y}|U})$ that meets Assumptions \ref{asumption1}.
\end{lemma}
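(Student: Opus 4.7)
The plan is to chain the three preceding lemmas into one almost-sure bound and then convert every remaining empirical quantity into a high-probability statement via a simultaneous application of the concentration inequalities \eqref{eq:concentracionkl}, \eqref{eq:concentracionpv} and \eqref{eq:concentracionch}. First I would combine Lemma \ref{lem:paso1disc} (which contributes $2\epsilon(K)$ from the discretization), Lemma \ref{lem:paso2dec} (which contributes $d(\mathcal{S}_n)$ and fixes the decoder mismatch), and Lemma \ref{lem:paso3cotaas} (which expresses the remaining discrete gap through $\mathcal{D}(\hat{P}^D_{XY}\|P^D_{XY})$, an $\ell_2$ deviation of $\mathbf{P}_Y$, two $\phi$-integrals over $\mathcal{U}$ driven by $\ell_2$ deviations of $\mathbf{P}^D_X$ and of the conditionals $\mathbf{P}^D_{X|Y}(\cdot|y)$, and a quadratic Taylor remainder). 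Lemma \ref{lem:simultaneidad} with the inflated level $\delta/(|\mathcal{Y}|+4)$ then makes the $|\mathcal{Y}|+4$ concentration events hold jointly with probability at least $1-\delta$: one event each for the KL bound, for $\|\mathbf{P}_Y-\mathbf{\hat{P}}_Y\|_2$, for $\|\mathbf{P}^D_X-\mathbf{\hat{P}}^D_X\|_2$, one per $y\in\mathcal{Y}$ for the conditional deviation, and the Chebyshev bound on $d(\mathcal{S}_n)$. Substituting gives a $\mathcal{O}(\log n/n)$ remainder from \eqref{eq:concentracionkl}, $\ell_2$ deviations uniformly $\leq B_\delta/\sqrt n$, and via Lemma \ref{lem:ghosal} plugged into Chebyshev the clean contribution $D_\delta\,\mathcal{D}_{\textrm{HL}}/\sqrt n$ absorbing $d(\mathcal{S}_n)$; the $\log(\text{Vol}(\mathcal{U})/P_Y(y_{\min}))\sqrt{|\mathcal{Y}|}\,B_\delta/\sqrt n$ contribution together with the $2\text{Vol}(\mathcal{U})e^{-1}$ safety constant coming from integrating $\phi$ over $\mathcal{U}$ (see \eqref{eq:cotaphi}) collect into $C_\delta/\sqrt n$.

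The crux is turning the two $\phi$-integrals into the stated mutual-information term. After substituting $\|\cdot\|_2\leq B_\delta/\sqrt n$ into both integrands, I would use the near-origin behaviour of $\phi$ from \eqref{eq:cotaphi} — an entropy-continuity bound of the form $\phi(t)\lesssim t\log(1/t)$ — to pull out a $\log(n)/\sqrt n$ prefactor, reducing each integral to $\int_\mathcal{U}\sqrt{\mathbb{V}(\mathbf{q}_{U|X}(u|\cdot))}\,du$. A Cauchy--Schwarz step combined with a chi-squared versus KL-type comparison, relating the empirical dispersion $\mathbb{V}$ of $q_{U|X}(u|\cdot)$ across atoms of $P^D_X$ to the divergence that defines $\mathcal{I}(P^D_X;q_{U|X})$, then bounds this integral by $\sqrt{2\,\mathcal{I}(P^D_X;q_{U|X})}$, producing $A_\delta=\sqrt 2\,B_\delta$. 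The conditional expectation $\mathbb{E}_{P_Y}[\int\phi(\cdots)\,du]$ is handled in the same way, except that conditioning on $Y=y$ re-weights the atoms of $P^D_{X|Y}(\cdot|y)$ relative to $P^D_X$, and the worst-case atomic re-weighting across cells is bounded by $r(K)$, the reciprocal of the minimum cell probability in \eqref{eq:epsilon_r_definitions}; this is exactly the origin of the $r(K)$ multiplier in the statement.

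Finally, taking the infimum over $K\in\mathbb{N}$ balances the discretization error $2\epsilon(K)$ against the $r(K)$-amplification of the mutual information, and absorbing the remaining $\mathcal{O}(\log n/n)$ pieces gives the claimed bound. I expect the main obstacle to be the third step: faithfully identifying $\int_\mathcal{U}\sqrt{\mathbb{V}(q_{U|X}(u|\cdot))}\,du$ with $\sqrt{\mathcal{I}(P^D_X;q_{U|X})}$ via the right chi-squared-to-KL inequality on the uniform-versus-$P^D_X$ weighting, and then tracking the $r(K)$ penalty that conditioning on $Y$ forces upon these atomic weights; once that identification is in place, every other step is a union bound, a Cauchy--Schwarz application, or bookkeeping of constants.
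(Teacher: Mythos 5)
Your overall architecture coincides with the paper's: chain Lemmas~\ref{lem:paso1disc}--\ref{lem:paso3cotaas}, apply Lemma~\ref{lem:simultaneidad} at level $\delta/(|\mathcal{Y}|+4)$ to the $|\mathcal{Y}|+4$ concentration events \eqref{eq:concentracionkl}--\eqref{eq:concentracionch}, absorb $d(\mathcal{S}_n)$ via Chebyshev and Lemma~\ref{lem:ghosal} into $D_\delta\,\mathcal{D}_{\text{HL}}/\sqrt{n}$, and linearize $\phi$ through Lemma~\ref{lem:cotaphi} to extract the $\log(n)/\sqrt{n}$ prefactor and the $2\text{Vol}(\mathcal{U})e^{-1}$ piece of $C_\delta$. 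The genuine gap sits exactly where you flagged the main obstacle. You assert that the marginal $\phi$-integral reduces to $\sqrt{2\,\mathcal{I}(P^D_X;q_{U|X})}$ \emph{without} any $r(K)$, and that $r(K)$ originates only from a re-weighting of atoms caused by conditioning on $Y$. The first claim is false in general: $\mathbb{V}\big(\mathbf{q}_{U|X}(u|\cdot)\big)$ is an \emph{unweighted} dispersion across the atoms of $P^D_X$, while $\mathcal{I}(P^D_X;q_{U|X})=\sum_{x\in\mathcal{A}}P^D_X(x)\,\mathcal{D}\big(q_{U|X}(\cdot|x)\|q^D_U\big)$ weights each atom by its probability. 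With two atoms of probabilities $1-\varepsilon$ and $\varepsilon$ whose encoder distributions have disjoint supports, $\int_{\mathcal{U}}\sqrt{\mathbb{V}\big(\mathbf{q}_{U|X}(u|\cdot)\big)}\,du$ remains of order one while $\mathcal{I}(P^D_X;q_{U|X})=\mathcal{O}(\varepsilon\log(1/\varepsilon))\to 0$, so no inequality of the form $\int\sqrt{\mathbb{V}}\,du\leq c\sqrt{\mathcal{I}}$ with a universal $c$ can hold; a factor of order $1/\min_x P^D_X(x)$, i.e.\ $r(K)$, is unavoidable already for the marginal term.

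In the paper's proof this is precisely where $r(K)$ enters: one writes $\sqrt{\mathbb{V}\big(\mathbf{q}_{U|X}(u|\cdot)\big)}\leq q^D_U(u)\sum_{x\in\mathcal{A}}\frac{1}{P^D_X(x)}\big|Q^D_{X|U}(x|u)-P^D_X(x)\big|$ (choosing $b=q^D_U(u)$ in \eqref{eq:emp_var} and using $\ell_2\leq\ell_1$), pulls out $r(K)$ as a bound on $1/P^D_X(x)$, applies Pinsker's inequality to get $\sqrt{2}\,r(K)\,q^D_U(u)\sqrt{\mathcal{D}\big(Q^D_{X|U}(\cdot|u)\|P^D_X\big)}$, and concludes by Jensen under $q^D_U$. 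The conditional term requires no separate re-weighting argument at all: the integrand $\sqrt{\mathbb{V}\big(\mathbf{q}_{U|X}(u|\cdot)\big)}$ does not depend on $y$ and the conditional deviations obey the same $B_\delta/\sqrt{n}$ bound, so $\mathbb{E}_{P_Y}[\cdot]$ is trivial and both $\phi$-integrals are controlled by the same marginal estimate (the factor $2$ being what produces $A_\delta=\sqrt{2}B_\delta$ and the $2\text{Vol}(\mathcal{U})e^{-1}$ in $C_\delta$). Your constant bookkeeping happens to land on the stated $A_\delta$ only because $1+r(K)\leq 2r(K)$, but the step it rests on --- the $r(K)$-free bound for the marginal integral --- does not hold and must be replaced by the weighted Pinsker--Jensen argument above.
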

\begin{proof}
	We bound the error gap using lemmas \ref{lem:paso1disc}, \ref{lem:paso2dec}, \ref{lem:paso3cotaas}, \ref{lem:ghosal} and \ref{lem:cotaphi}, with probability at least $1-\delta$:
	\begin{align}
	\mathcal{E}_{\textrm{gap}}(q_{U|X},Q_{\hat{Y}|U},\mathcal{S}_n)
	&\leq2\epsilon(K)+ \frac{D_\delta}{\sqrt{n}}\mathcal{D}_{\text{HL}}\big({Q}^D_{Y|U}\|Q_{\hat{Y}|U}|q^D_U\big)+\log \left(\frac{\text{Vol}\left(\mathcal{U}\right)}{P_{Y}(y_{\min})}\right)\sqrt{|\mathcal{Y}|}\frac{B_\delta}{\sqrt{n}}\nonumber\\
	&\quad+2\int_{\mathcal{U}}\phi\left(\frac{B_\delta}{\sqrt{n}}\sqrt{\mathbb{V}\left( \textbf{q}_{U|X}(u|\cdot ) \right)} \right)du+\mathcal{O}\left(\frac{\log(n)}{n}\right)\\
	&\leq2\epsilon(K)+ \frac{D_\delta}{\sqrt{n}}\mathcal{D}_{\text{HL}}\big({Q}^D_{Y|U}\|Q_{\hat{Y}|U}|q^D_U\big)+\log \left(\frac{\text{Vol}\left(\mathcal{U}\right)}{P_{Y}(y_{\min})}\right)\sqrt{|\mathcal{Y}|}\frac{B_\delta}{\sqrt{n}}\nonumber\\
	&\quad+\frac{\log(n)}{\sqrt{n}}B_\delta\int_{\mathcal{U}}\sqrt{\mathbb{V}\left( \mathbf{q}_{U|X}(u|\cdot)  \right)}du +\frac{2\text{Vol}(\mathcal{U})e^{-1}}{\sqrt{n}}+\mathcal{O}\left(\frac{\log(n)}{n}\right).
	\end{align}
	We relate the mutual information $\mathcal{I}(P_X^D;q_{U|X})$ with  $\int_{\mathcal{U}}\sqrt{\mathbb{V}\left( \mathbf{q}_{U|X}(u|\cdot)  \right)}du$. Its proof follows from an application of Pinsker's inequality \cite[Lemma 11.6.1]{cover} $\|\mathbf{P}_1-\mathbf{P}_2\|_1^2\leq2\mathcal{D}(P_1\|P_2)$ and the fact that $\mathbb{V}(\mathbf{c}) \leq\left\|\mathbf{c}-b\mathds{1}_a\right\|_2^2\ , \ \forall b\in\mathbb{R}$:
	\begin{align}
	\sqrt{\mathbb{V}\left( \mathbf{q}_{U|X}(u|\cdot)  \right)}&\leq\sqrt{\sum_{x\in\mathcal{A}}\left[q_{U|X}(u|x)-q_U^D(u)\right]^2}\\
	&=q^D_U(u)\sqrt{\sum_{x\in\mathcal{A}}\left[\frac{Q^D_{X|U}(x|u)}{P^D_X(x)}-1\right]^2}\\
	&\leq q^D_U(u)\sum_{x\in\mathcal{A}}\left|\frac{Q^D_{X|U}(x|u)}{P^D_X(x)}-1\right|\\
	&= q^D_U(u)\sum_{x\in\mathcal{A}}\frac{1}{P_X^D(k)}\left|Q^D_{X|U}(x|u)-P_X^D(x)\right|\\
	&\leq \sqrt{2}r(K)\cdot q^D_U(u)\sqrt{\mathcal{D}\left(Q^D_{X|U}(\cdot|u)\|P^D_X\right)},
	\end{align}
	where $Q^D_{X|U}(k|u)=\frac{q_{U|X}(u|x^{(k)})P_X^D(k)}{q^D_U(u)}$. So, using Jensen inequality
	\begin{align}
	\int_{\mathcal{U}}\sqrt{\mathbb{V}\left( \mathbf{q}_{U|X}(u|\cdot)  \right)}du&\leq\sqrt{2}r(K)\mathbb{E}_{q_U^D}\left[\sqrt{\mathcal{D}\left(Q^D_{X|U}(\cdot|u)\|P^D_X\right)}\right]\\
	&\leq\sqrt{2}\cdot r(K)\cdot\sqrt{\mathcal{I}(P_X^D;q_{U|X})},
	\end{align}
	for all $K\in\mathbb{N}$. Finally the lemma is proved taking the infimum over $K$.
\end{proof}

Finally, using the identity $\mathcal{I}(p_X;q_{U|X})=\mathcal{I}(p_{XY};q_{U|X})$ and the fact that the $k$ where a sample $x$ belongs is a deterministic function of $x$ and $y$,  we can use the Data Processing Inequality \cite{cover} $\mathcal{I}(P_X^D;q_{U|X})\leq\mathcal{I}(p_{XY};q_{U|X})=\mathcal{I}(p_{X};q_{U|X})$. With this result the lemma \ref{lem:coninfodiscreta} becomes in our Theorem \ref{thm:regularizar}.

\begin{rem}
	It is worth mentioning the differences between our result and that presented in \cite{Shamir:2010:LGI:1808343.1808503}.  While we work with the cross-entropy gap making appear several terms in lemma \ref{lem:paso3cotaas}, their only bounded the mutual information gap:
	\begin{align}
	&|\mathcal{I}\big(q^D_U;Q^D_{Y|U}\big)-\mathcal{I}\big(\hat{q}^D_U;\hat{Q}^D_{Y|U}\big)|\nonumber\\
	&\qquad\leq\left| \mathcal{H}_d(q^D_{U})- \mathcal{H}_d(\hat{q}^D_{U})\right|+\left| \mathcal{H}_d(q^D_{U|Y}|P_Y)- \mathcal{H}_d(\hat{q}^D_{U|Y}|\hat{P}_Y)\right|.
	\end{align} 
	For this reason we had to use more concentration inequalities including \eqref{eq:concentracionkl} and Chebyshev, while their work only uses Lemma \ref{lem:pvector}. In addition, our final mutual information is expressed in terms of continuous pdf, while theirs is a function of the discrete pmf. Finally some constants were subtly reduced and we extend the result for continuous representations of $U$.
\end{rem}

\section{Experimental Details and Other Results}

In this section we mention the details of the simulations in Section \ref{sec:experimentos}, and present new simulations using the CIFAR-10 dataset \citep{cifar}. We sample two different random subsets of: MNIST (standard dataset of handwritten digits) and CIFAR-10 (natural images). The size of the training set is $5K$ for both datasets. It is important to emphasize  the main difference between the datasets: while MNIST propose a task that is simpler, CIFAR-10 is a more difficult one, especially with a small numbers of samples and without convolutional DNNs. From this observation, we would expect that in the latter case, the classifier  will not be trained enough to have an close to optimal performance. We approximate $\mathcal{L}(q_{U|X},Q_{\hat{Y}|U})$ with a $5K$ dataset and $\mathcal{L}_{\text{emp}}(q_{U|X},Q_{\hat{Y}|U},{\mathcal{S}_n})$ with different independent mini-testing datasets of $100$ samples, i.e., using the rest of the features. The $\mathcal{E}_{\textrm{gap}}(q_{U|X},Q_{\hat{Y}|U},\mathcal{S}_n)$ $0.95$-quantile ($\delta=0.05$) is computed based on the different values of the testing risk. The values reported in each simulation are the average of three simulations independent, choosing at random, in each case, different sets. Below are the architectures used in each setup:

\begin{itemize}
	\item {\bf Normal encoder:} The DNNs used was a feed-forward layer of $512$ hidden units with ReLU activation followed by another linear for each parameter ($\mu$ and $\log\sigma^2$) with $256$ hidden units. I.e., each parameter, $\mu$ and $\log\sigma^2$, are a two-layers network where the first one is common to both. We choose a learning rate in $0.001$, a batch-size of $100$ and we train during $200$ epochs. The cost function considered during the training phase was of the form,
	\begin{equation}\label{eq:normal-lognormal cost function}
	\mathcal{L}_{\text{emp}}(q_{U|X},Q_{\hat{Y}|U},\mathcal{D}_{l})+\lambda \sum_{j=1}^m\frac{1}{l}\sum_{i=1}^l\mathcal{D}\left(q_{U_j|\mathbf{X}}(\cdot|\mathbf{x}_i) \big\| \tilde{q}_{U_j}\right),
	\end{equation} 
	where $\lambda$ is the regulation Lagrange multiplier and $\mathcal{D}_{l}$ is the $l$-training dataset.
	\item {\bf LogNormal encoder:} The DNNs used for $\mathbf{f}(\mathbf{x})$ was a feed-forward structure with two layers of $256$ hidden units with a softplus activation and for $\boldsymbol{\alpha}(\mathbf{x})$ a feed-forward layer of $256$ hidden units with a sigmoid activation multiplied by $0.7$, so that the
	maximum variance of the log-normal error distribution will
	be approximately $1$ \cite{2016arXiv161101353A}. We choose a learning rate in $0.001$, a batch-size of $100$ and we train during $200$ epochs. The cost function trained was the same that the one in \eqref{eq:normal-lognormal cost function}.
	
	\item {\bf RBM encoder:} Eq. \eqref{eq:rbm} is difficult to use as a regularizer even using training with the contrastive divergence learning procedure \cite{Hinton2002Training}. Instead, we rely on the usual RBM regularization: weight-decay. This is a traditional way to improve the generalization capacity. We explore the effect of the Lagrange multiplier $\lambda$, so called weight-cost, over both the gap and the mutual information. This meta-parameter controls the gradient weight decay, i.e., the cost function can be written as:
	\begin{equation}
	CD_{\text{RBM}}+\frac{\lambda}{2}\|\mathbf{W}\|_F^2,
	\end{equation}
	where $CD_{\text{RBM}}$ is the classical unsupervised RBM cost function trained with the contrastive divergence learning procedure \cite{Hinton_guia}, and $\mathbf{W}$ is the matrix that has $\mathbf{w}_j$, with $j\in[1:m]$, as columns. In order to compute the gap we add to the output of the last RBM layer a soft-max regression decoder  trained during $500$ epochs separately. Several authors have combined RBMs with soft-max regression, among them \cite{Hinton2006_DBN}, \cite{DBLP:conf/uai/SrivastavaSH13} and \cite{Chopra2018}. 
	
	Following suggestions from \cite{Hinton_guia}, we study the Lagrange multiplier when $\lambda\geq0.00001$. We choose learning rates in $0.1$, a batch-size of $100$, $256$ hidden units and we train during $200$ epochs. We start with a momentum of $0.5$ and change to $0.9$ after $5$ epochs. 
\end{itemize}

The following subsections present new experiment using CIFAR-10 dataset: Comparison of the behavior of the error gap and mutual information, and an example about the tradeoff between $\epsilon(K)$ and $r(K)$. 

\subsection{Comparison between the behavior of the error gap and mutual information}

\begin{figure}
	\centering
	\begin{subfigure}{.32\textwidth}
		\centering
		\includegraphics[scale=0.24]{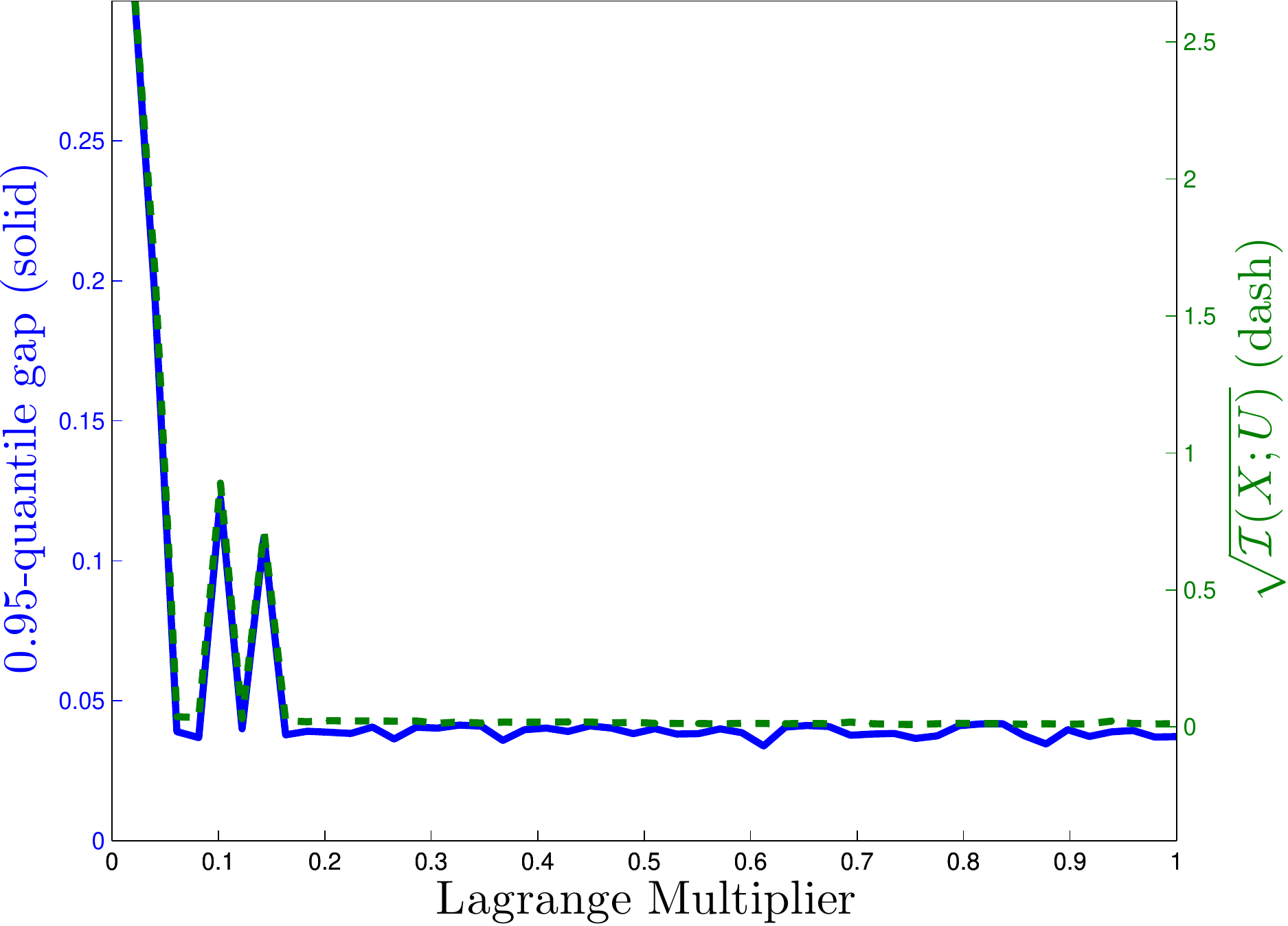}
		\caption{}
		\label{fig:normalcifar}
	\end{subfigure}
	\begin{subfigure}{.32\textwidth}
		\centering
		\includegraphics[scale=0.24]{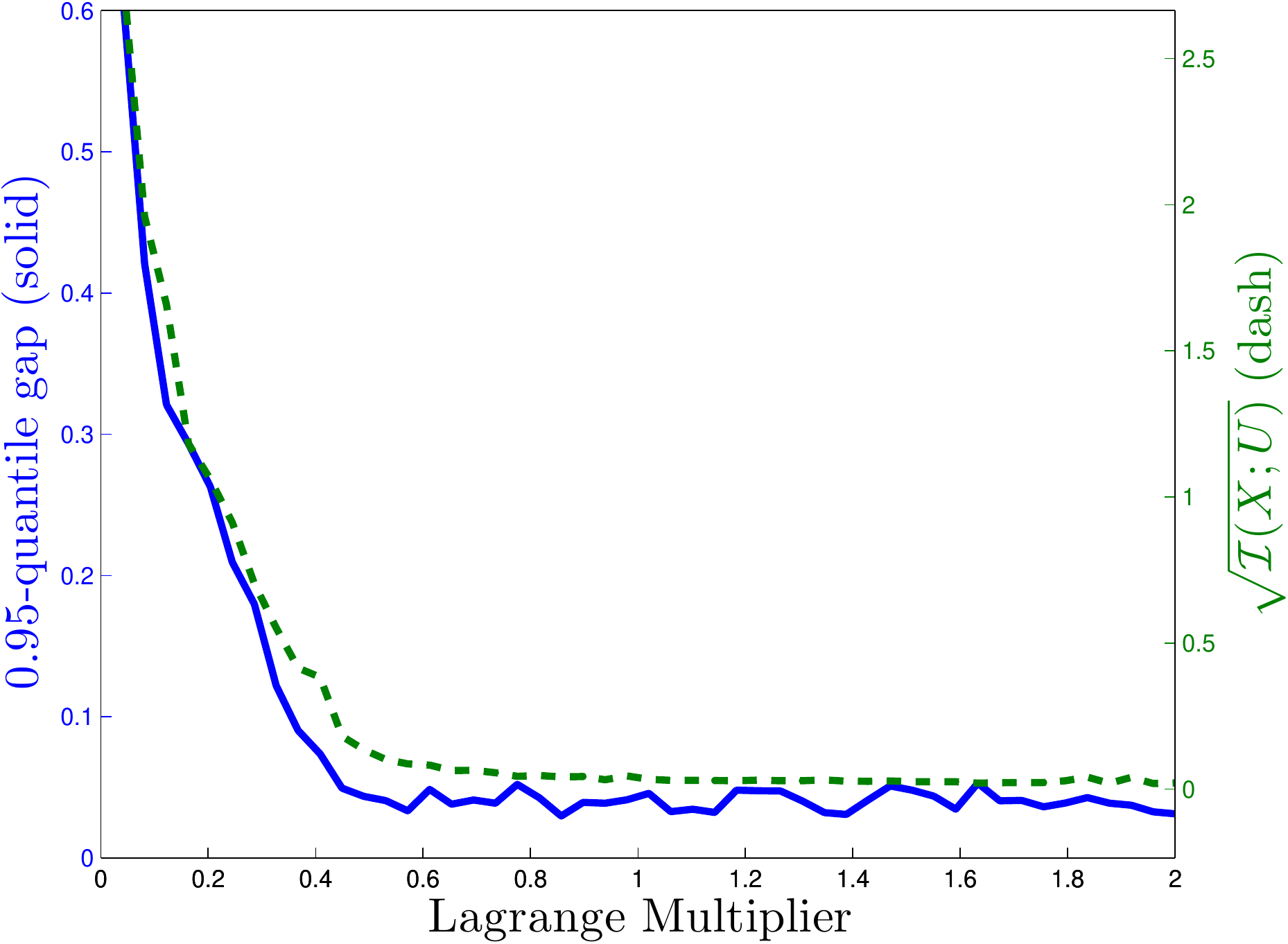}
		\caption{}
		\label{fig:lognormalcifar}
	\end{subfigure}
	\begin{subfigure}{.32\textwidth}
	\centering
	\includegraphics[scale=0.24]{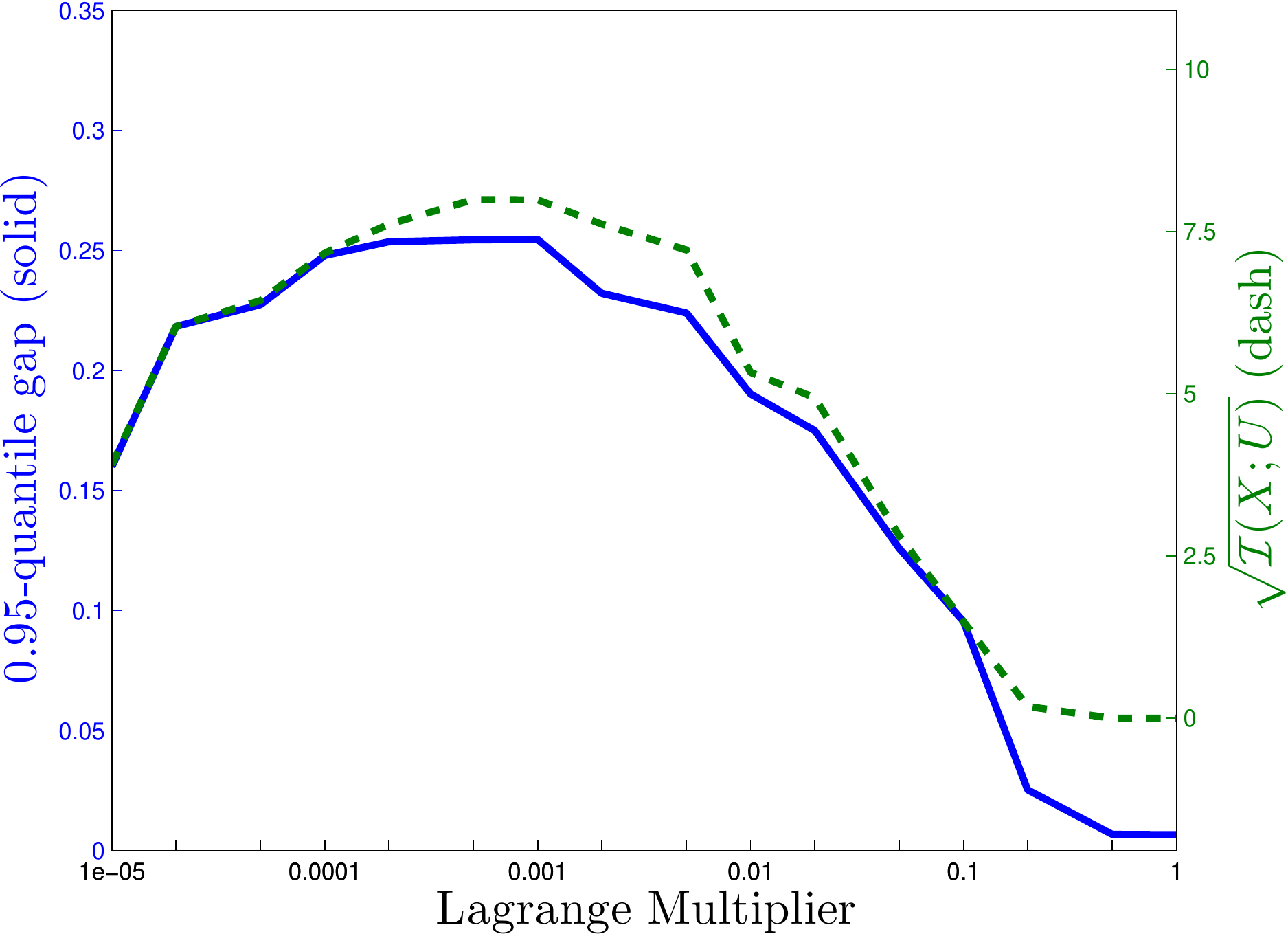}
	\caption{}
	\label{fig:rbmcifar}
	\end{subfigure}
	\caption{Comparison between $0.95$-quantile of $\mathcal{E}_{\textrm{gap}}(q_{U|X},Q_{\hat{Y}|U},\mathcal{S}_n)$ and the mutual information variational bound \eqref{eq:inf_radius_opt} for CIFAR-10 dataset and: (a) Normal encoder, (b) LogNormal encoder, (c) RBM encoder.}
\end{figure}

In this section we repeat simulation of Section \ref{sec:experimentos} for CIFAR-10 dataset, testing with images without disturbing. Figures \ref{fig:normalcifar}, \ref{fig:lognormalcifar} and \ref{fig:rbmcifar} show $0.95$-quantile  of $\mathcal{E}_{\textrm{gap}}(q_{U|X},Q_{\hat{Y}|U},\mathcal{S}_n)$ and the mutual information variational bound \eqref{eq:inf_radius_opt} as a function of the Lagrange multiplier $\lambda$ for Normal encoder, logNormal encoder and RBM encoder respectively. In these cases, mutual information and error gap have a behavior closer than MNIST.

\subsection{Discretization Tradeoff with CIFAR-10 dataset}

We use the setup presented in Section \ref{sec:normal_encoder} to implement numerically this tradeoff between $\epsilon(K)$ and $r(K)$ through K-Means algorithm. For every $K$, we iterate between:
\begin{itemize}
	\item \textbf{The samples coloring}: Let the loss centroids $\{\ell(x^{(k)},y)\}_{k=1}^K$ for each $y\in\mathcal
	Y$, we assign $x_i\in\mathcal{K}_{k_i}$, where $k_i$ is computed as
	\begin{equation}
	k_i=\arg\min_{1\leq k\leq K}\max_{y\in\mathcal{Y}}\left|\ell(x_i,y)-\ell(x^{(k)},y)\right|;
	\end{equation}
	\item \textbf{Find centroids}: We compute for each\footnote{We compute the loss centroid inside the true centroid $x^{(k)}$.} $k$ and each $y\in\mathcal{Y}$
	\begin{equation}
	\ell(x^{(k)},y)=\frac{1}{|\{i:\;x_i\in\mathcal{K}_k|\}}\sum_{\substack{i:\\x_i\in\mathcal{K}_k}}\ell(x_i,y).
	\end{equation}	
\end{itemize}
After that, we estimate $\epsilon(K)$ and $r(K)$ as:
\begin{equation}
\epsilon(K)=\max_{1\leq i\leq n}\max_{y\in\mathcal{Y}}\left|\ell(x_i,y)-\ell(x^{(k_i)},y)\right|,\quad r(K)=\frac{1}{\displaystyle\min_{1\leq k \leq K}\frac{|\{i:\;x_i\in\mathcal{K}_k|}{n}},
\end{equation}
using a $5K$ dataset of CIFAR-10. This algorithm focus on minimize $\epsilon(K)$ inside the tradeoff, and this function is really sensible to the samples (it only depends of the maximum). Fig. \ref{fig:vsk} shows a noisy tradeoff, which achieves its minimum at $K=16$.
\begin{figure}
	\centering
	\includegraphics[scale=0.5]{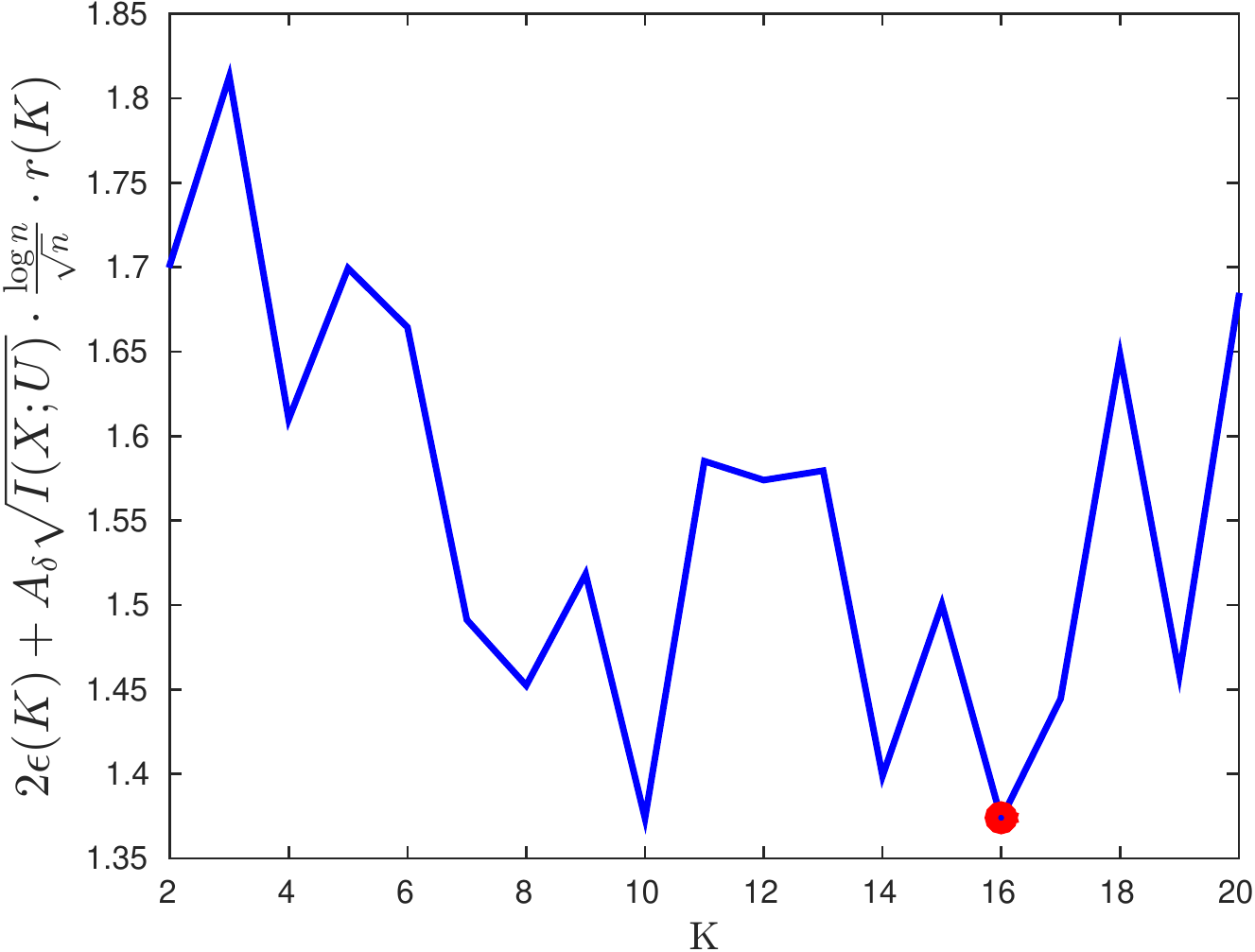}
	\caption{Tradeoff between $\epsilon(K)$ and $r(K)$ vs the number of cells, highlighting its minimum value, in a Normal Encoder example with CIFAR-10 dataset.}
	\label{fig:vsk}
\end{figure}

\section{Auxiliary Results}\label{app:auxiliary-results}

In this appendix some auxiliary facts, which are used in proof of the main results, are listed without proof.

\begin{lemma}[Theorem 11.2.1 \citep{cover}]\label{lem:diverg}
	Let ${P}\in\mathcal{P}(\mathcal{X})$ be a discrete probability distribution and let $\hat{P}$ be its empirical estimation over a $n$-data set $\mathcal{S}_n$. Then, 
	\begin{equation}
	\mathcal{D}(\hat{P}\|P)\leq |\mathcal{X} | \frac{\log(n+1)}{n}+\frac{1}{n}\log(1/\delta)
	\end{equation}
	with probability at least $1-\delta$ over the choice of $\mathcal{S}_n$.
\end{lemma}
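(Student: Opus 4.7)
My approach decomposes the gap through a chain of triangle inequalities that progressively reduces a continuous-alphabet problem to a discrete one and then exploits information-theoretic concentration. The first move is to quantize the input space $\mathcal{X}$ into $K$ cells $\{\mathcal{K}_k^{(y)}\}$ per label $y$, replacing each example by its centroid $x^{(k,y)}$. Because $|\ell(x,y)-\ell(x^{(k,y)},y)|\le \epsilon(K)$ inside every cell, both the true and empirical risks pay at most $\epsilon(K)$, producing $\mathcal{E}_{\text{gap}}\le 2\epsilon(K)+\mathcal{E}^D_{\text{gap}}$ where the latter is the analogous gap computed on the atomized joint distribution $P^D_{XY}$ supported on $\{x^{(k,y)}\}$. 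I would then decouple the fixed decoder $Q_{\hat Y|U}$ from the ``canonical'' decoder $Q^D_{Y|U}$ induced by $q_{U|X}$ and $P^D_{XY}$ by writing $\ell(q_{U|X}(\cdot|x),Q_{\hat Y|U}(y|\cdot))=\ell(q_{U|X}(\cdot|x),Q^D_{Y|U}(y|\cdot))+T(x,y)$, with $T(x,y)=\mathbb{E}_{q_{U|X}}[\log(Q^D_{Y|U}/Q_{\hat Y|U})\mid X{=}x]$. Since $\mathbb{E}_{P^D_{XY}}[T]=\mathcal{D}(Q^D_{Y|U}\|Q_{\hat Y|U}|q^D_U)$, Chebyshev's inequality reduces the decoder-mismatch contribution to controlling $\mathrm{Var}_{P^D_{XY}}(T)$.

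To obtain the Hellinger-distance form of that variance bound I would invoke the elementary inequality $x^2\le 4e^{-c/2}(e^{x/2}-1)^2$ valid for $x\ge c$, applied with $c=\log Q_{\hat Y|U}(y_{\min}|u_{\min})$ and $x=\log(Q_{\hat Y|U}/Q^D_{Y|U})$; taking expectation under $q^D_U Q^D_{Y|U}$ yields exactly $\mathcal{D}_{\text{HL}}^2$ modulo the factor $Q_{\hat Y|U}^{-1/2}(y_{\min}|u_{\min})$ that absorbs into $D_\delta$. Combined with a $1/\sqrt{n\delta}$ Chebyshev envelope, this produces the $D_\delta \mathcal{D}_{\text{HL}}/\sqrt n$ contribution.

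The remaining piece, $\mathcal{E}^D_{\text{gap}}(q_{U|X},Q^D_{Y|U},\mathcal{S}_n)$, I would unfold by adding and subtracting the empirical conditional entropy $\mathcal{H}(\hat Q^D_{Y|U}|\hat q^D_U)$. This generates three species of terms: (i) a KL gap $\mathcal{D}(\hat P^D_{XY}\|P^D_{XY})$, absorbed into the $\mathcal{O}(\log n/n)$ residual via the method-of-types estimate (Lemma~\ref{lem:diverg}); (ii) a marginal-entropy gap $|\mathcal{H}(P_Y)-\mathcal{H}(\hat P_Y)|$ handled by a first-order Taylor expansion of the entropy and Cauchy--Schwarz together with the empirical pmf $\ell_2$-concentration, producing the $\log(\text{Vol}(\mathcal{U})/P_Y(y_{\min}))\sqrt{|\mathcal{Y}|}\,B_\delta/\sqrt n$ summand in $C_\delta$; and (iii) two differential-entropy gaps $|\mathcal{H}_d(q^D_U)-\mathcal{H}_d(\hat q^D_U)|$ and its conditional analogue, which I would control by a Lipschitz-type inequality applied pointwise in $u$, yielding $\int_{\mathcal U}\phi\bigl(\|\mathbf P^D_X-\hat{\mathbf P}^D_X\|_2\sqrt{\mathbb V(\mathbf q_{U|X}(u|\cdot))}\bigr)\,du$ and the analogous conditional term.

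The main obstacle, and the payoff of the whole construction, is converting $\int_{\mathcal U}\sqrt{\mathbb V(\mathbf q_{U|X}(u|\cdot))}\,du$ into $\sqrt{\mathcal I(p_X;q_{U|X})}\,r(K)$. The plan is: bound $\sqrt{\mathbb V(\cdot)}\le\|\mathbf q_{U|X}(u|\cdot)-q^D_U(u)\mathds 1\|_2$, rewrite each entry as $q^D_U(u)\bigl(Q^D_{X|U}(x|u)/P^D_X(x)-1\bigr)$, pass to $\ell_1$ with a factor $1/\min_x P^D_X(x)=r(K)$, apply Pinsker $\|\cdot\|_1\le\sqrt{2\mathcal D(\cdot\|\cdot)}$, integrate against $q^D_U$, and pull the square root outside via Jensen to land on $\sqrt{2}\,r(K)\sqrt{\mathcal I(P^D_X;q_{U|X})}$. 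A final invocation of the data-processing inequality replaces $\mathcal I(P^D_X;q_{U|X})$ by $\mathcal I(p_X;q_{U|X})$, since the cell index is a deterministic function of $(X,Y)$. All concentration events—one KL bound, three $\ell_2$ empirical-pmf bounds, and one Chebyshev bound—must hold simultaneously, which I enforce by a union bound that converts $\delta$ into $\delta/(|\mathcal Y|+4)$ and produces the logarithmic factor hidden inside $A_\delta,B_\delta,C_\delta,D_\delta$. Taking the infimum over $K$ at the very end gives precisely \eqref{eq-main-bound-GG}.
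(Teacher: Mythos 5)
Your proposal does not address the statement it is meant to prove. Lemma~\ref{lem:diverg} is the elementary method-of-types concentration inequality $\mathcal{D}(\hat{P}\|P)\leq |\mathcal{X}|\log(n+1)/n+\log(1/\delta)/n$ for a discrete distribution $P$ and its empirical estimate $\hat{P}$. What you have written is instead an outline of the proof of the main result, Theorem~\ref{thm:regularizar}, inside which Lemma~\ref{lem:diverg} is merely \emph{invoked} as a black box --- you cite it yourself in your step (i) to absorb $\mathcal{D}(\hat{P}^D_{XY}\|P^D_{XY})$ into the $\mathcal{O}(\log n/n)$ residual. Nowhere do you establish the lemma itself, so as a proof of the stated result the proposal has a gap that is total.

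The missing argument is short but specific. For an i.i.d.\ sample of size $n$ from $P$ on the finite alphabet $\mathcal{X}$, the empirical distribution $\hat{P}$ is a type; the probability of observing any fixed type $Q$ is at most $e^{-n\mathcal{D}(Q\|P)}$, and the number of distinct types is at most $(n+1)^{|\mathcal{X}|}$. Hence
\[
\Pr\left(\mathcal{D}(\hat{P}\|P)>t\right)\;\leq\;\sum_{Q:\,\mathcal{D}(Q\|P)>t}e^{-n\mathcal{D}(Q\|P)}\;\leq\;(n+1)^{|\mathcal{X}|}\,e^{-nt},
\]
and setting the right-hand side equal to $\delta$ and solving for $t$ yields exactly the claimed bound with probability at least $1-\delta$. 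The paper states this lemma without proof, citing Theorem~11.2.1 of \cite{cover}, so a complete answer should reproduce this counting argument (or an equivalent one). As an aside, your outline of Theorem~\ref{thm:regularizar} does track the paper's own chain of lemmas --- quantization, decoder decoupling, entropy decomposition, Pinsker plus Jensen, data processing, and the $(|\mathcal{Y}|+4)$-fold union bound --- quite faithfully; but that is not the statement you were asked to prove.
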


\begin{lemma}[Union bound application]\label{lem:simultaneidad}
	Let $\{\mathcal{A}_k\}_{k=1}^m$ events such that $\Pr(\mathcal{A}_k)\geq1-\delta$ for each $k\in[1: m]$. Then, $\mathbb{P}\left(\bigcap_{k=1}^m \mathcal{A}_k\right)\geq1-\delta m$.
	%	\begin{equation}\label{eq:induction_simultaneos}
	%	\mathbb{P}\left(\bigcap_{k=1}^m \mathcal{A}_k\right)\geq1-\delta m.
	%	\end{equation}
\end{lemma}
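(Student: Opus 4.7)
The plan is to pass to complements and apply the classical countable subadditivity of probability (Boole's inequality). The statement is exactly the standard ``union bound applied to bad events'' trick, so the proof will be essentially one line once the events are complemented correctly.

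First I would set $\mathcal{B}_k \coloneqq \mathcal{A}_k^c$ for $k \in [1:m]$. The hypothesis $\Pr(\mathcal{A}_k) \geq 1 - \delta$ immediately gives $\Pr(\mathcal{B}_k) \leq \delta$. Next I would invoke De Morgan's law to rewrite the complement of the intersection as a union, namely $\left(\bigcap_{k=1}^m \mathcal{A}_k\right)^c = \bigcup_{k=1}^m \mathcal{B}_k$. Then countable subadditivity of the probability measure yields
\begin{equation}
\Pr\!\left(\bigcup_{k=1}^m \mathcal{B}_k\right) \leq \sum_{k=1}^m \Pr(\mathcal{B}_k) \leq \sum_{k=1}^m \delta = m\delta.
\end{equation}
Finally I would take complements back, writing
\begin{equation}
\Pr\!\left(\bigcap_{k=1}^m \mathcal{A}_k\right) = 1 - \Pr\!\left(\bigcup_{k=1}^m \mathcal{B}_k\right) \geq 1 - m\delta,
\end{equation}
which is the claimed bound.

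There is no real obstacle here; the only thing worth noting is that the result does not require any independence or structural assumption on the events $\mathcal{A}_k$, since Boole's inequality holds for arbitrary (not necessarily disjoint or independent) events in any probability space. The bound is also vacuous unless $m\delta < 1$, but the lemma is stated as an inequality so that case causes no issue. In the application in the main text, this lemma is invoked with $\delta$ replaced by $\delta/(|\mathcal{Y}|+4)$ so that the simultaneous failure probability across the $|\mathcal{Y}|+4$ concentration events remains at most $\delta$, which is precisely why the lemma is stated in this generic form.
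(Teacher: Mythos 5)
Your proof is correct and is the standard complement-plus-Boole's-inequality argument; the paper states this lemma without proof in its auxiliary results section, and your one-line derivation is exactly the argument it implicitly relies on. Nothing is missing.
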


\begin{lemma}[Application of McDiarmid's Inequality to the vector probability]\label{lem:pvector}
	Let ${P}\in\mathcal{P}(\mathcal{X})$ be any probability distribution and let $\hat{{P}}$ be its empirical estimation over a $n$-data set $\mathcal{S}_n$. Then, with probability at least $1-\delta$ over $\mathcal{S}_n$:
	\begin{equation}
	\|\mathbf{P}-\mathbf{\hat{{P}}}\|_2\leq\frac{1+\sqrt{\log(1/\delta)}}{\sqrt{n}}.
	\end{equation}
\end{lemma}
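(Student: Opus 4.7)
The plan is to set $f(\mathcal{S}_n) \coloneqq \|\mathbf{P}-\mathbf{\hat{P}}\|_2$, regarded as a function of $n$ i.i.d. draws $X_1,\dots,X_n \sim P$, and obtain the stated deviation bound by combining an upper bound on $\mathbb{E}[f]$ with McDiarmid's bounded differences inequality applied to $f$. This is the canonical ``expectation plus concentration around the mean'' template.

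First I would control $\mathbb{E}[f]$. Because $n\hat{P}(x)$ is Binomial with parameters $(n,P(x))$, we have $\mathrm{Var}(\hat{P}(x)) = P(x)(1-P(x))/n$, and Jensen's inequality gives $\mathbb{E}\bigl[\|\mathbf{P}-\mathbf{\hat{P}}\|_2\bigr] \leq \sqrt{\mathbb{E}\bigl[\|\mathbf{P}-\mathbf{\hat{P}}\|_2^2\bigr]} = \sqrt{\tfrac{1}{n}\sum_{x\in\mathcal{X}}P(x)(1-P(x))} \leq \tfrac{1}{\sqrt{n}}$, where the last step uses $\sum_x P(x)(1-P(x)) \leq \sum_x P(x) = 1$. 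This already accounts for the ``$1$'' appearing in the numerator of the claimed bound, and notably does not require any restriction on $|\mathcal{X}|$.

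Next I would verify the bounded differences condition. Replacing a single sample $x_i$ by an arbitrary $x_i'$ alters $\hat{P}$ at most at two coordinates (namely $x_i$ and $x_i'$), and each by at most $1/n$ in absolute value, so the $\ell_2$ perturbation of $\hat{P}$ is at most $\sqrt{2}/n$. By the reverse triangle inequality for $\|\cdot\|_2$ this yields $|f(\mathcal{S}_n) - f(\mathcal{S}_n')| \leq \sqrt{2}/n$ uniformly in the remaining coordinates. McDiarmid's inequality with $c_i = \sqrt{2}/n$ then gives $\Pr\bigl(f - \mathbb{E}[f] \geq t\bigr) \leq \exp\bigl(-2t^2/\sum_{i=1}^n c_i^2\bigr) = \exp(-nt^2)$, and choosing $t = \sqrt{\log(1/\delta)/n}$ makes the right-hand side equal to $\delta$. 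Combining the two ingredients yields, with probability at least $1-\delta$, $\|\mathbf{P}-\mathbf{\hat{P}}\|_2 \leq \tfrac{1}{\sqrt{n}} + \sqrt{\tfrac{\log(1/\delta)}{n}} = \tfrac{1+\sqrt{\log(1/\delta)}}{\sqrt{n}}$, matching the lemma exactly.

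There is no genuine obstacle here; the only mild subtlety is measurability and finiteness of $f$ when $\mathcal{X}$ is countably infinite, which is automatic since $\|\mathbf{P}\|_2 \leq \|\mathbf{P}\|_1 = 1$ and likewise for $\mathbf{\hat{P}}$, so $f$ takes values in $[0,2]$ and the bounded differences argument needs no truncation. I would also note in passing that both ingredients are standard and essentially tight: the expectation bound is sharp up to the $\sum_x P(x)^2$ term, and McDiarmid's inequality with $c_i=\sqrt{2}/n$ is the natural sub-Gaussian surrogate for the $\ell_2$ functional of a multinomial empirical measure.
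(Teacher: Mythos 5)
Your proof is correct and follows exactly the route the paper intends: the lemma is stated in the appendix of auxiliary results without proof, but its very title names McDiarmid's inequality, and your argument (Jensen/variance bound $\mathbb{E}\|\mathbf{P}-\hat{\mathbf{P}}\|_2 \leq 1/\sqrt{n}$ plus bounded differences with $c_i=\sqrt{2}/n$) is the standard instantiation, with all constants checking out.
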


\begin{lemma}[Adaptation of \cite{Shamir:2010:LGI:1808343.1808503}]\label{lem:Hu}
	Consider the encoder given by $q_{U|X}$. We have
	\begin{equation}
	\left| \mathcal{H}_d(q^D_U)- \mathcal{H}_d(\hat{q}^D_U)\right|\leq\int_\mathcal{U}\phi\left(\|\mathbf{P}_X-\mathbf{\hat{P}}_X\|_2 \sqrt{\mathbb{V}\big( \mathbf{q}_{U|X}(u|\cdot) \big) }\right)du 
	\end{equation}
	with
	\begin{equation}
	\phi(x)=\left\{\begin{array}{cc}0&x\leq0\\-x\log(x)&0<x<e^{-1}\\e^{-1}&x\geq e^{-1}
	\end{array}\right.
	\label{eq:cotaphi}
	\end{equation}
	for $\|\mathbf{P}_X-\mathbf{\hat{P}}_X\|_2$ not so small.
\end{lemma}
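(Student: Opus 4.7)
The plan is to bound the entropy difference pointwise and then use the modulus of continuity of $-x\log x$, which yields the function $\phi$ in the statement.

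First, I would write $q^D_U(u)-\hat{q}^D_U(u) = \sum_{x\in\mathcal{A}} q_{U|X}(u|x)\bigl(P^D_X(x)-\hat{P}^D_X(x)\bigr) = \langle \mathbf{q}_{U|X}(u|\cdot),\mathbf{P}_X^D-\mathbf{\hat P}_X^D\rangle$. Since both $\mathbf{P}_X^D$ and $\mathbf{\hat P}_X^D$ are probability vectors, their difference is orthogonal to $\mathds{1}_{|\mathcal{A}|}$, so for any constant $c$ I can replace $\mathbf{q}_{U|X}(u|\cdot)$ by $\mathbf{q}_{U|X}(u|\cdot)-c\mathds{1}$ without changing the inner product. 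Choosing $c$ to be the empirical mean $\bar q(u)$ minimizes the Euclidean norm of the centered vector, and Cauchy--Schwarz then gives
\begin{equation}
\bigl|q^D_U(u)-\hat{q}^D_U(u)\bigr|\leq \sqrt{\mathbb{V}\bigl(\mathbf{q}_{U|X}(u|\cdot)\bigr)}\cdot \bigl\|\mathbf{P}_X-\mathbf{\hat P}_X\bigr\|_2,
\end{equation}
exactly matching the argument of $\phi$ in the target bound.

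Next, I would bound the integrand of the entropy difference by the modulus of continuity of $f(t)=-t\log t$. For $a,b\in[0,1]$ with $|a-b|\leq e^{-1}$ one has $|f(a)-f(b)|\leq -|a-b|\log|a-b|$, which can be checked from the concavity of $f$ and the fact that $f(0)=0$ while $f$ achieves its global maximum $e^{-1}$ at $t=e^{-1}$; for $|a-b|\geq e^{-1}$ one simply uses $|f(a)-f(b)|\leq \max f = e^{-1}$. Together these two regimes are exactly the definition of $\phi$. Therefore, writing the entropy difference as a single integral and pulling the absolute value inside,
\begin{equation}
\bigl|\mathcal{H}_d(q^D_U)-\mathcal{H}_d(\hat{q}^D_U)\bigr|\leq \int_{\mathcal{U}}\bigl|f(q^D_U(u))-f(\hat q^D_U(u))\bigr|\,du\leq \int_\mathcal{U}\phi\bigl(|q^D_U(u)-\hat q^D_U(u)|\bigr)\,du.
\end{equation}

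Finally, since $\phi$ is non-decreasing on $[0,\infty)$, I can substitute the Cauchy--Schwarz bound on the argument to obtain the stated inequality. The main obstacle is the modulus-of-continuity step: strictly, $|f(a)-f(b)|\leq\phi(|a-b|)$ is cleanest when the densities $q^D_U(u),\hat q^D_U(u)$ lie in $[0,1]$, which is why the statement carries the caveat ``for $\|\mathbf{P}_X-\mathbf{\hat P}_X\|_2$ not so small'' --- in that regime the pointwise differences are small enough that the local logarithmic modulus dominates, while for unbounded densities one must either localize to the sub-level set where $f$ is monotone or absorb the remaining contribution into the $e^{-1}$ plateau of $\phi$; a careful case split on whether both densities lie in $[0,e^{-1}]$, both lie in $[e^{-1},1]$, or one of them exceeds $1$ (handled via the plateau) is the delicate piece.
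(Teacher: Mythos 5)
The paper gives no proof of this lemma at all: it sits in the appendix of ``auxiliary facts, which are \ldots listed without proof'' as an adaptation of \cite{Shamir:2010:LGI:1808343.1808503}. Your route --- writing $q^D_U(u)-\hat q^D_U(u)$ as an inner product with the zero-sum vector $\mathbf{P}^D_X-\mathbf{\hat{P}}^D_X$, centering $\mathbf{q}_{U|X}(u|\cdot)$ at its mean so that Cauchy--Schwarz produces exactly $\sqrt{\mathbb{V}(\mathbf{q}_{U|X}(u|\cdot))}\,\|\mathbf{P}_X-\mathbf{\hat{P}}_X\|_2$, and then invoking the modulus of continuity of $f(t)=-t\log t$ --- is precisely the argument in the cited source, and your first two steps are correct. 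On $[0,1]$ the pointwise bound $|f(a)-f(b)|\le\phi(|a-b|)$ holds unconditionally: by concavity the increment of $f$ over a gap $\nu$ is maximized at an endpoint, giving $\max\{f(\nu),f(1-\nu)\}$, which is at most $-\nu\log\nu$ for $\nu\le e^{-1}$ and at most $e^{-1}$ otherwise. So in the discrete setting of the original reference your proof is complete, and the monotonicity of $\phi$ justifies the final substitution.

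The genuine gap is the one you name in your last sentence, and your proposed repair does not close it. When $\mathcal{U}$ is continuous, $q^D_U(u)$ and $\hat q^D_U(u)$ are densities that may exceed $1$; there $f$ is negative with slope $-\log t-1$ of unbounded magnitude, so $|f(a)-f(b)|$ is not bounded by $e^{-1}$ and cannot be ``absorbed into the plateau'': for instance $a=100$, $b=100.1$ gives $|f(a)-f(b)|\approx 0.56>e^{-1}>\phi(0.1)$, so the case split you sketch fails exactly in its third branch. A correct continuous version needs an extra hypothesis such as $\sup_{u,x}q_{U|X}(u|x)\le 1$ (which forces both mixture densities into $[0,1]$), or a rescaling of $\mathcal{U}$ achieving this at the price of an additive $\log$-volume term of the kind that already appears in Lemma~\ref{lem:Hu|y}. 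The paper's caveat ``for $\|\mathbf{P}_X-\mathbf{\hat{P}}_X\|_2$ not so small'' is orthogonal to this issue and does not resolve it; your write-up is as complete as the paper's citation, but the densities-above-one case remains unproven in both.
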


\begin{lemma}[Adaptation of \cite{Shamir:2010:LGI:1808343.1808503}]\label{lem:Hu|y} 
	Let $\mathcal{U}$ a compact space. Consider the encoder $q_{U|X}$, then	
	\begin{align}
	\left| \mathcal{H}_d(q^D_{U|Y}|P_Y)- \mathcal{H}_d(\hat{q}^D_{U|Y}|\hat{P}_Y)\right|&\leq \|\mathbf{P}_Y\!-\!\mathbf{\hat{P}}_Y\|_2\sqrt{|\mathcal{Y}|}\log\text{Vol}\left(\mathcal{U}\right)\\
	&\hspace{-2.5cm}+\mathbb{E}_{P_Y}\!\left[ \int_\mathcal{U}\!\phi\left(\left\|\mathbf{P}_{X|Y}(\cdot|Y)\!-\!\mathbf{\hat{P}}_{X|Y}(\cdot|Y)\right\|_2\!\sqrt{\mathbb{V}\big( \mathbf{q}_{U|X}(u|\cdot )  \big) }\right)du\right],\nonumber%\label{eq-last-term}
	\end{align}
	for $\max_y\|\mathbf{P}_{X|Y}(\cdot|y)-\mathbf{\hat{P}}_{X|Y}(\cdot|y)\|_2$ not so small
\end{lemma}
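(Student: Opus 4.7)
I would follow a five-step pipeline: (i) discretize the input space via partitions $\{\mathcal{K}_k^{(y)}\}$ with centroids $\{x^{(k,y)}\}$ and replace the continuous gap by a discrete surrogate at cost $2\epsilon(K)$; (ii) decouple the given decoder from the encoder-induced one $Q^D_{Y|U}$, controlling the error via Chebyshev on the log-ratio random variable $T(x,y) = \mathbb{E}_{q_{U|X}}[\log(Q^D_{Y|U}(y|U)/Q_{\hat{Y}|U}(y|U))\mid X=x]$ whose variance I bound by $\mathcal{D}_{\text{HL}}^2/\sqrt{Q_{\hat{Y}|U}(y_{\min}|u_{\min})}$ using the elementary pointwise inequality $x^2 \leq 4e^{-c/2}(e^{x/2}-1)^2$ with $c = \log Q_{\hat{Y}|U}(y_{\min}|u_{\min})$; (iii) apply the chain rule of cross-entropy to express the matched-decoder discrete gap as the sum of an $\mathcal{H}(P_Y)$ fluctuation, an $\mathcal{H}_d(q^D_U)$ fluctuation, an $\mathcal{H}_d(q^D_{U|Y}|P_Y)$ fluctuation, plus a residual $\mathcal{D}(\hat{Q}^D_{Y|U}\|Q^D_{Y|U}|\hat{q}^D_U)$ upper bounded by $\mathcal{D}(\hat{P}^D_{XY}\|P^D_{XY})$ which Lemma \ref{lem:diverg} handles at rate $O(\log n/n)$; (iv) concentrate each empirical marginal via McDiarmid and apply a union bound with $\delta \to \delta/(|\mathcal{Y}|+4)$; (v) convert the $\ell^2$-dispersion of $\mathbf{q}_{U|X}(u|\cdot)$ into $\sqrt{\mathcal{I}(p_X;q_{U|X})}$ via Pinsker, Jensen, and the data processing inequality.

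\textbf{Heart of the argument.} For step (iii), rather than invoking any off-the-shelf entropy-continuity lemma as a black box, I would derive the required estimates inline. Since $q^D_U(u) = \langle \mathbf{q}_{U|X}(u|\cdot),\mathbf{P}^D_X\rangle$ is linear in the atomic probability vector, Cauchy--Schwarz shows that an $\ell^2$-perturbation of $\mathbf{P}^D_X$ by $\eta$ shifts $q^D_U(u)$ pointwise by at most $\eta\sqrt{\mathbb{V}(\mathbf{q}_{U|X}(u|\cdot))}$, and a line integral of $\nabla\mathcal{H}_d$ along the perturbation path (exploiting concavity of differential entropy and the uniform Lipschitz bound of $-t\log t$ on bounded intervals) makes the entropy fluctuation at most the integral over $u$ of $\phi(\eta\sqrt{\mathbb{V}})$, with $\phi(t) = -t\log t$ on $(0,e^{-1}]$ and $\phi(t) = e^{-1}$ beyond. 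The conditional case is handled by taking expectation over $P_Y$ and adding a separate $\sqrt{|\mathcal{Y}|}\log(\text{Vol}(\mathcal{U})/P_Y(y_{\min}))\|\mathbf{P}_Y-\hat{\mathbf{P}}_Y\|_2$ correction coming from a first-order Taylor expansion of $\mathcal{H}(P_Y)$ and a uniform-density bound on $q^D_{U|Y}$. The $\log(n)/\sqrt n$ scaling in the final bound emerges precisely because $\phi$ is evaluated at the McDiarmid scale $\eta = B_\delta/\sqrt n$: one checks $\phi(c/\sqrt n) = (c/\sqrt n)\log(\sqrt n/c)$, explaining the otherwise surprising logarithmic factor.

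\textbf{From dispersion to mutual information and main obstacle.} The final, most delicate step is relating $\int_{\mathcal{U}}\sqrt{\mathbb{V}(\mathbf{q}_{U|X}(u|\cdot))}\,du$ to the mutual information in the theorem. Bounding $\ell^2$ by $\ell^1$, factoring $q^D_U(u)$ out of the sum, and dividing each summand by $P^D_X(x)$ at a cost of $r(K) = 1/\min_{k,y}\int_{\mathcal{K}_k^{(y)}} p_X\,dx$, I obtain the total-variation distance between $Q^D_{X|U}(\cdot|u)$ and $P^D_X$; Pinsker promotes this to $\sqrt{2\mathcal{D}(Q^D_{X|U}(\cdot|u)\|P^D_X)}$, Jensen absorbs the integral into $\sqrt{\mathcal{I}(P^D_X;q_{U|X})}$, and the data processing inequality along the quantization map $X\mapsto (k,y)$ lifts this to $\sqrt{\mathcal{I}(p_X;q_{U|X})}$. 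Taking the infimum over $K$ yields the bound as stated. The main obstacle is keeping the five steps in harmony: the partition must be fine enough for $\epsilon(K)$ to be small but coarse enough that $r(K)$ stays controlled, while all the concentration rates must line up so the leading term carries exactly the $\sqrt{\mathcal{I}}\log(n)/\sqrt n$ scaling. The $r(K)$ penalty, which arises purely from the $\ell^1$-to-KL renormalization and not from any empirical-process estimate, is the subtlest ingredient and is precisely what prevents the naive ``send $K\to\infty$'' simplification.
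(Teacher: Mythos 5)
The statement you were asked to prove is Lemma~\ref{lem:Hu|y} alone: a deterministic continuity estimate for the conditional differential entropy $\mathcal{H}_d(q^D_{U|Y}|P_Y)$ under perturbation of the underlying probability vectors. The paper itself gives no proof of this lemma --- it sits in the ``Auxiliary Results'' appendix, explicitly ``listed without proof'' as an adaptation of \cite{Shamir:2010:LGI:1808343.1808503} --- so there is no internal proof to compare against; but most of your proposal does not prove it either. Steps (i), (ii), (iv) and (v) of your pipeline (the $\epsilon(K)$ discretization, the decoder decoupling via $T(x,y)$ and the Hellinger variance bound, the union bound, and the Pinsker/Jensen/data-processing conversion to $\sqrt{\mathcal{I}(p_X;q_{U|X})}$ with the $r(K)$ penalty) reconstruct the proof of Theorem~\ref{thm:regularizar} via Lemmas~\ref{lem:paso1disc}, \ref{lem:paso2dec}, \ref{lem:ghosal} and \ref{lem:coninfodiscreta}; none of that machinery is relevant here, since no concentration, no mutual information and no partition refinement enters the statement.

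The one sentence that does bear on the lemma (``the conditional case is handled by taking expectation over $P_Y$ and adding a \ldots correction'') is too thin and misattributes the correction term. What is needed is the exact decomposition
\begin{align*}
\mathcal{H}_d(q^D_{U|Y}|P_Y)-\mathcal{H}_d(\hat{q}^D_{U|Y}|\hat{P}_Y) &= \sum_{y\in\mathcal{Y}}\big(P_Y(y)-\hat{P}_Y(y)\big)\,\mathcal{H}_d\big(\hat{q}^D_{U|Y}(\cdot|y)\big)\\
&\quad+\sum_{y\in\mathcal{Y}}P_Y(y)\Big(\mathcal{H}_d\big(q^D_{U|Y}(\cdot|y)\big)-\mathcal{H}_d\big(\hat{q}^D_{U|Y}(\cdot|y)\big)\Big).
\end{align*}
The first sum is bounded by Cauchy--Schwarz together with the maximum-entropy bound $|\mathcal{H}_d(\cdot)|\leq\log\text{Vol}(\mathcal{U})$ on the compact set $\mathcal{U}$ --- this is precisely where compactness is used --- yielding $\|\mathbf{P}_Y-\mathbf{\hat{P}}_Y\|_2\sqrt{|\mathcal{Y}|}\log\text{Vol}(\mathcal{U})$. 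The second sum is the per-$y$ application of your unconditional argument (Cauchy--Schwarz with the centering that produces $\sqrt{\mathbb{V}(\mathbf{q}_{U|X}(u|\cdot))}$, then the $\phi$ estimate, i.e.\ Lemma~\ref{lem:Hu} with $\mathbf{P}_{X|Y}(\cdot|y)$ in place of $\mathbf{P}_X$), averaged under $P_Y$ --- which is why the expectation in the statement is with respect to $P_Y$ and not $\hat{P}_Y$. Your proposed correction $\sqrt{|\mathcal{Y}|}\log(\text{Vol}(\mathcal{U})/P_Y(y_{\min}))\|\mathbf{P}_Y-\mathbf{\hat{P}}_Y\|_2$ conflates this lemma with the subsequent assembly in Lemma~\ref{lem:paso3cotaas}: the $P_Y(y_{\min})$ factor arises there from a separate first-order Taylor expansion of $\mathcal{H}(P_Y)$ and has nothing to do with the present statement, so attributing it to ``a uniform-density bound on $q^D_{U|Y}$'' is incorrect. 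One further point to flag if you write this out in full: $\mathcal{H}_d(\cdot)\leq\log\text{Vol}(\mathcal{U})$ is automatic, but the lower bound needed to control $|\mathcal{H}_d(\cdot)|$ is not, so some boundedness of the densities $q_{U|X}(\cdot|x)$ must be assumed or inherited from the cited adaptation.
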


\begin{lemma}[\cite{Shamir:2010:LGI:1808343.1808503}]\label{lem:cotaphi}
	Let $n\geq a^{2}e^{2}$, then $	\phi\left(\frac{a}{\sqrt{n}}\right)\leq\frac{a}{2}\frac{\log(n)}{\sqrt{n}}+\frac{e^{-1}}{\sqrt{n}}$.
\end{lemma}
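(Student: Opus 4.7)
The plan is to reduce the inequality to a one-line estimate on the function $-x\log x$. The hypothesis $n \geq a^{2}e^{2}$ is there precisely to pin down which branch of $\phi$ is active: rearranging gives $\sqrt{n} \geq ae$, i.e.\ $a/\sqrt{n} \leq e^{-1}$. Assuming $a>0$ (the only case of interest, since $\phi$ is applied to a non-negative quantity in Lemma~\ref{lem:coninfodiscreta}), the argument $a/\sqrt{n}$ therefore lies in the interval $(0,e^{-1}]$, so by the middle branch of the definition \eqref{eq:cotaphi},
\begin{equation*}
\phi\!\left(\frac{a}{\sqrt{n}}\right) = -\frac{a}{\sqrt{n}}\log\!\frac{a}{\sqrt{n}} = -\frac{a\log a}{\sqrt{n}} + \frac{a}{2}\cdot\frac{\log n}{\sqrt{n}}.
\end{equation*}

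Comparing term by term with the bound claimed in the lemma, the $\tfrac{a}{2}\log(n)/\sqrt{n}$ contributions match exactly, and it remains to show $-a\log a \leq e^{-1}$. This is the classical maximum of the function $f(x)=-x\log x$ on $x>0$: differentiating gives $f'(x)=-\log x - 1$, which vanishes uniquely at $x=e^{-1}$, and $f(e^{-1}) = e^{-1}$. Hence $-a\log a \leq e^{-1}$ for every $a>0$ (for $a\geq 1$ the left-hand side is even non-positive). Dividing by $\sqrt{n}$ yields the desired bound.

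There is no real obstacle here; the only subtle point is justifying that the argument falls into the non-trivial middle branch of $\phi$, which is exactly the role of the quantitative hypothesis $n\geq a^{2}e^{2}$. If one wanted to allow $a\leq 0$, the statement would need to be modified, but in every invocation inside the paper (e.g.\ with $a = B_\delta$ or $a = B_\delta\sqrt{\mathbb{V}(\mathbf{q}_{U|X}(u|\cdot))}$) the constant is manifestly non-negative, so the $a>0$ case suffices.
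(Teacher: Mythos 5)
Your proof is correct and complete: the hypothesis $n\geq a^{2}e^{2}$ indeed forces $a/\sqrt{n}\leq e^{-1}$, the split $-\tfrac{a}{\sqrt{n}}\log\tfrac{a}{\sqrt{n}}=-\tfrac{a\log a}{\sqrt{n}}+\tfrac{a}{2}\tfrac{\log n}{\sqrt{n}}$ is exact, and bounding $-a\log a$ by the maximum $e^{-1}$ of $x\mapsto -x\log x$ finishes the argument (the boundary case $a/\sqrt{n}=e^{-1}$ is harmless since $\phi$ is continuous there and both branches give $e^{-1}$). Note that the paper itself states this lemma without proof, attributing it to the cited reference, so there is no in-paper argument to compare against; yours is the natural and standard one.
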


\end{document}